\def\eqref#1{equation~\ref{#1}}
\def\Eqref#1{Equation~\ref{#1}}
\def\1{\bm{1}}
\def\vx{{\bm{x}}}
\def\vz{{\bm{z}}}
\DeclareMathAlphabet{\mathsfit}{\encodingdefault}{\sfdefault}{m}{sl}
\SetMathAlphabet{\mathsfit}{bold}{\encodingdefault}{\sfdefault}{bx}{n}
\theoremstyle{plain}
\newtheorem{theorem}{Theorem}   
\newtheorem{assumption}[theorem]{Assumption}
\theoremstyle{definition}
\theoremstyle{remark}
\newcommand{\bminus}{\ensuremath{\mathbin{\color{blue}{-}}}}
\newcommand{\gminus}{\ensuremath{\mathbin{\color{green}{-}}}}
\newcommand{\ominuscol}{\ensuremath{\mathbin{\color{orange}{-}}}}
\def\vx{{\bm{x}}}
\def\vz{{\bm{z}}}
\definecolor{red}{rgb}{0.969,0.086,0.278}
\definecolor{green}{rgb}{0.0235, 0.5412, 0.2118}
\title{DiFFPO: Training Diffusion LLMs to Reason Fast and Furious via Reinforcement Learning}
\author{%
  Hanyang Zhao \\
  Columbia University\\
  \texttt{hz2684@columbia.edu} \\
  \And
  Dawen Liang \\
  Netflix\\
  \texttt{dliang@netflix.com} \\
  \And
  Wenpin Tang \\
  Columbia University\\
  \texttt{wt2319@columbia.edu} \\
  \And
  David D.\ Yao \\
  Columbia University\\
  \texttt{yao@columbia.edu} \\
  \And
  Nathan Kallus \\
  Netflix\\
  \texttt{nkallus@netflix.com} \\
}
\begin{document}

\maketitle

\begin{abstract}
  We propose \textbf{DiFF}PO, \textbf{Di}ffusion \textbf{F}ast and \textbf{F}urious Policy Optimization, a unified framework for training masked diffusion large language models (dLLMs) to reason not only \textit{better (furious)}, but also \textit{faster} via reinforcement learning (RL). We first unify the existing baseline approach such as \texttt{d1} \citep{zhao2025d1} by proposing to train surrogate policies via off-policy RL, whose likelihood is much more tractable as an approximation to the true dLLM policy. This naturally motivates a more accurate and informative two-stage likelihood approximation combined with importance sampling correction, which leads to generalized RL algorithms with better sample efficiency and superior task performance. Second, we propose a new direction of joint training efficient samplers/controllers of dLLMs policy. Using RL, we incentivize dLLMs' natural multi-token prediction capabilities by letting the model learn to adaptively allocate an inference threshold for each prompt. By jointly training the sampler, we yield better accuracies with lower number of function evaluations (NFEs) compared to training the model only, obtaining the best performance in improving the Pareto frontier of the inference-time compute of dLLMs. We showcase the effectiveness of our pipeline by training open source large diffusion language models over benchmark math and planning tasks.
\end{abstract}

\section{Introduction}
Reinforcement Learning from Verifiable Rewards (RLVR, \cite{lambert2024tulu}) has achieved remarkable success in enhancing the reasoning capabilities of Large Language Models (LLMs) \citep{jaech2024openai,guo2025deepseek}. 
The fine-tuned Large Reasoning Models (LRMs) show drastic improvement in solving tasks which require strong reasoning capabilities, such as Math and Coding \citep{le2022coderl,shao2024deepseekmath}.
These LRMs match, and even surpass the performance of the best human players in math contests \citep{chen2025seed}. 
Despite these successes, 
LRMs are notorious for long inference-time,
and overthinking for easy questions \citep{chen2024not, su2025between}, 
which limit their applicability to scenarios with low tolerance for latency or inference budgets. 
An active line of recent research on efficient reasoning \citep{sui2025stop} proposed either training-free early stopping mechanisms, 
or generation-length-aware RL (Reinforcement Learning) objectives \citep{kim2025train,xu2025scalable}.
However,
the resulting RL fine-tuned models are still fundamentally bottle-necked by the left-to-right autoregressive (AR) decoding in decoder-only transformers \citep{vaswani2017attention}, which will inevitably
suffer from quadratic inference costs with respect to the length of reasoning traces.

Diffusion LLMs (dLLMs) \citep{lou2023discrete,sahoo2024simple,shi2024simplified,LLaDA,Dream}, an emerging family of LLMs based on discrete-space diffusion models \citep{hoogeboom2021argmax,austin2021structured}, 
have the natural premise of going beyond {\em left-to-right generations} 
to {\em any-order generations} and {\em multi-token predictions}. Proprietary dLLMs like Mercury \citep{mercury}, Gemini Diffusion and Seed Diffusion \citep{song2025seed} maintain comparable quality to the state-of-the-art AR models while achieving up to 10 times of token throughput.
In contrast with the extensive literature on post-training AR models, the relevant study for post-training dLLMs through RL in the literature so far remains rather limited. Whether RL can be used to enhance the reasoning capabilities of dLLMs still remains largely open, and how to design RL algorithms catered for dLLMs to incentivize the capability of base model remains unexplored yet of crucial importance to improve frontier dLLMs.

To narrow this gap, in this work, we design scalable and effective RL algorithms for incentiving reasoning capability of dLLMs. Our contributions in this paper can be summarized as follows:

(1) 
We first propose an efficient RL post-training paradigm for fine-tuning dLLMs from an off-policy RL perspective.
Concretely, we generalize recently proposed algorithms like \texttt{d1} \citep{zhao2025d1} by proposing to train efficient surrogate policies with more tractable likelihood,
as opposed to directly training base dLLMs policies, 
which requires extensive GPU memory to estimate and is inefficient to compute dLLM's true likelihood.
Based on our framework, we propose a new RL algorithm utilizing new surrogate policies by conditioning on additional latents at the response-generation level for better approximation, 
instead of only conditioning on prompts as in \texttt{d1}. Inspired by classical off-policy RL,
we also introduce an importance-sampling correction term to address the distribution mismatch between the surrogate policies and the true dLLMs behavior policies. We provide both theoretical guarantees and strong empirical evidence, achieving evident performance gains over the baseline method, especially for planning tasks (like Countdown, Sudoku) in Figure \ref{fig:Benchmark Results Plots}.

\begin{figure}[!htbp]
  \centering
  \begin{subfigure}[t]{0.24\textwidth}
    \includegraphics[width=\linewidth]{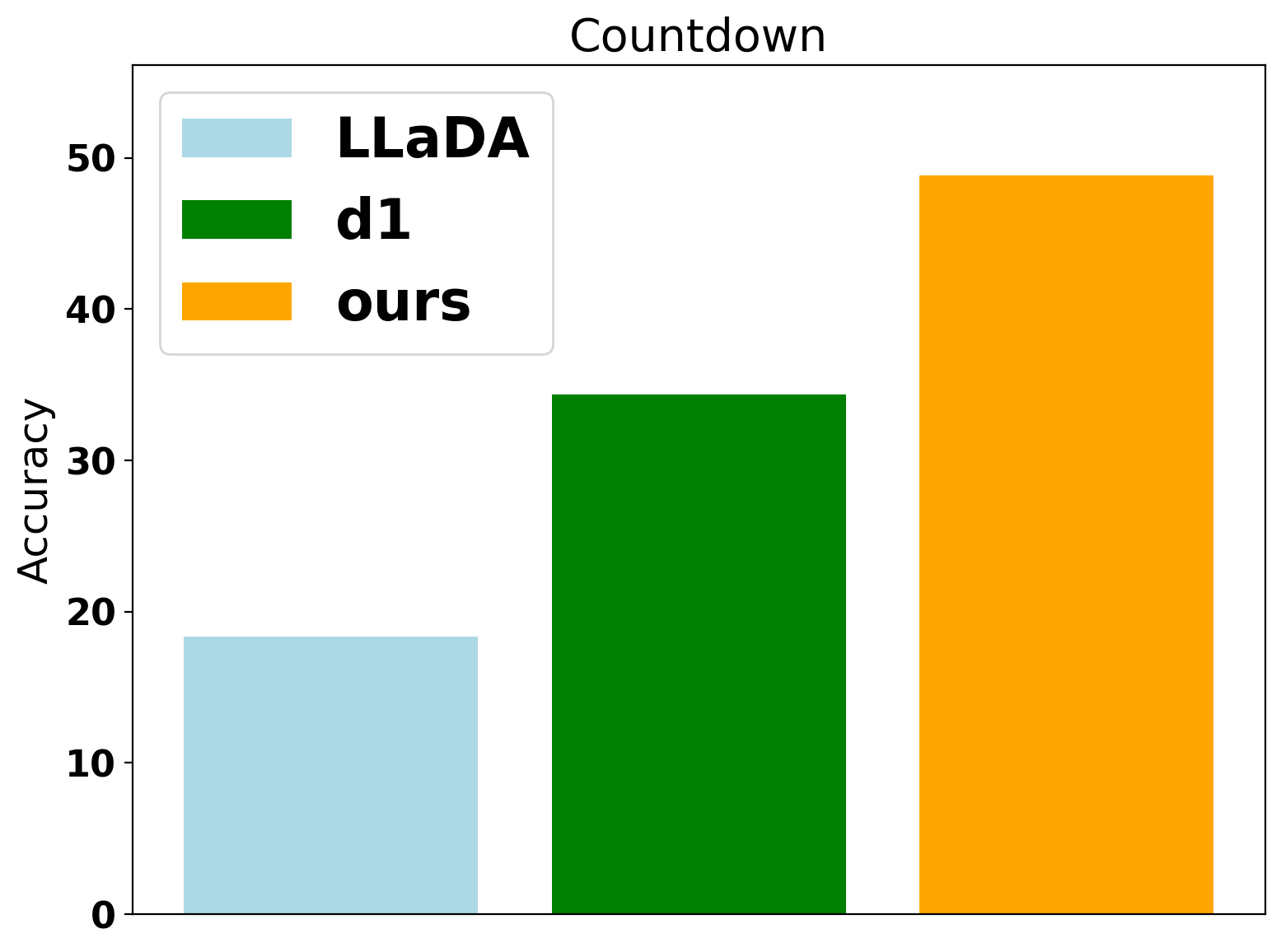}
    \caption{Countdown}
  \end{subfigure}\hfill
  \begin{subfigure}[t]{0.24\textwidth}
    \includegraphics[width=\linewidth]{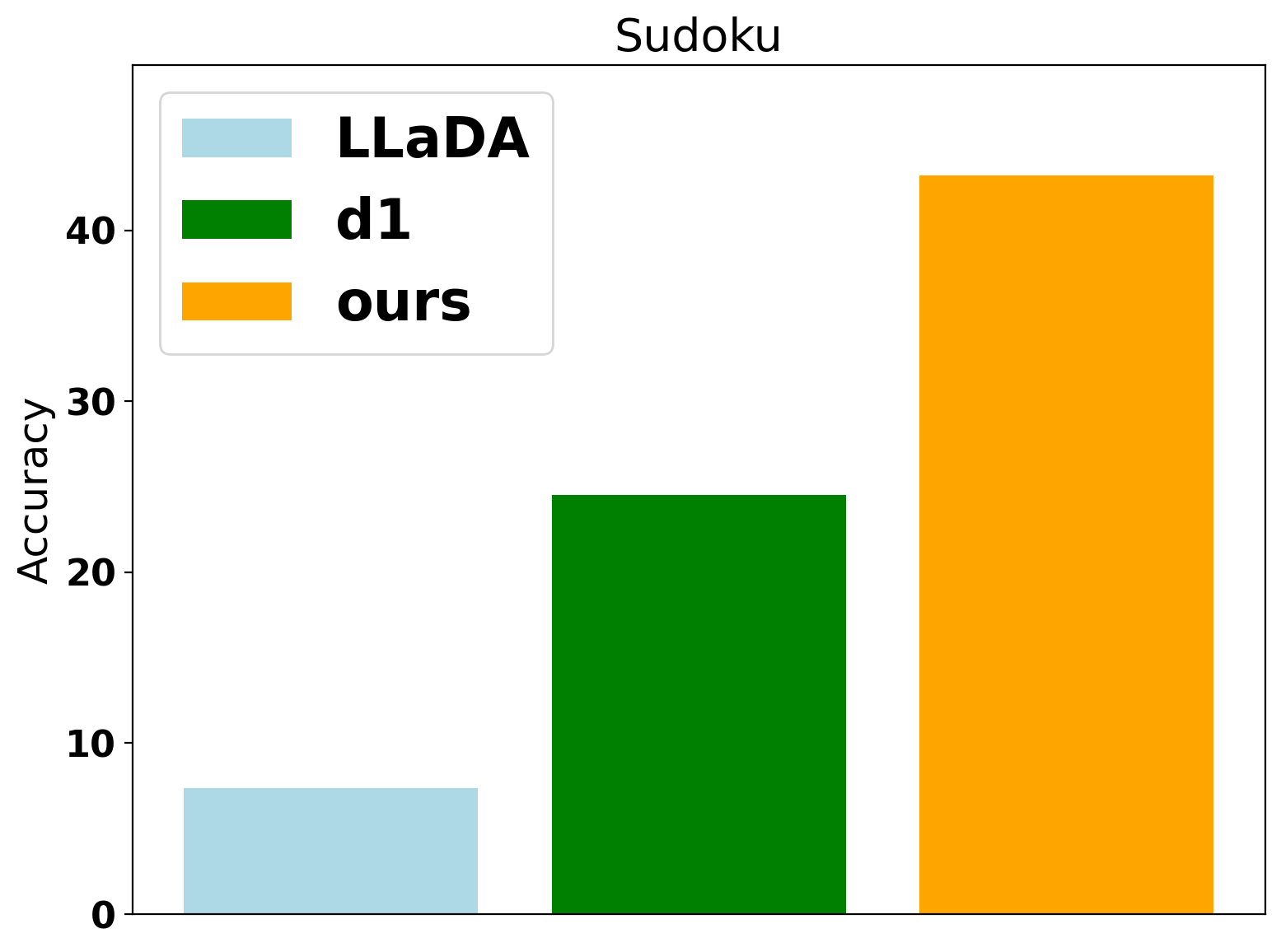}
    \caption{Sudoku}
  \end{subfigure}
  \begin{subfigure}[t]{0.24\textwidth}
    \includegraphics[width=\linewidth]{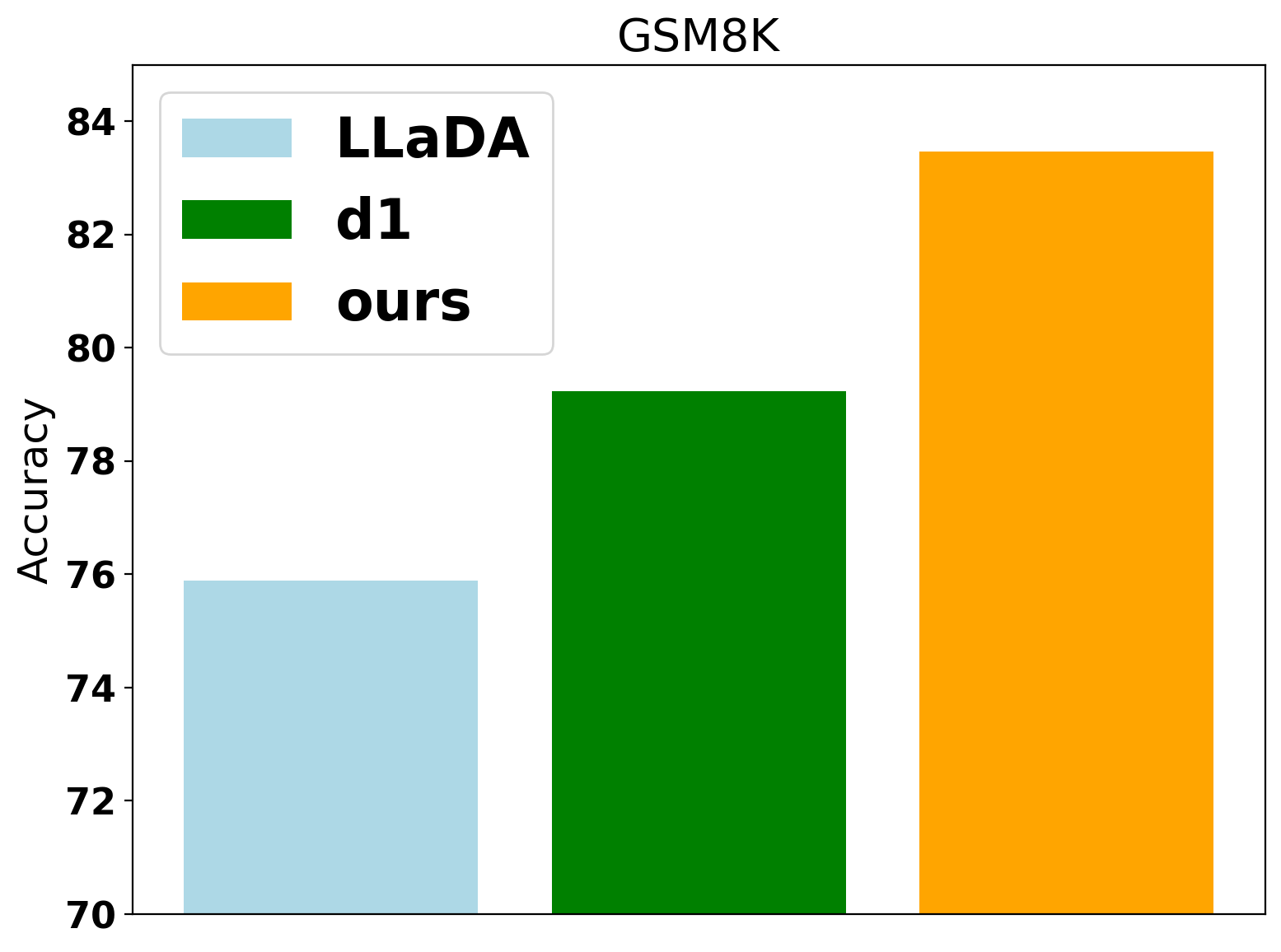}
    \caption{GSM8K}
  \end{subfigure}\hfill
  \begin{subfigure}[t]{0.24\textwidth}
    \includegraphics[width=\linewidth]{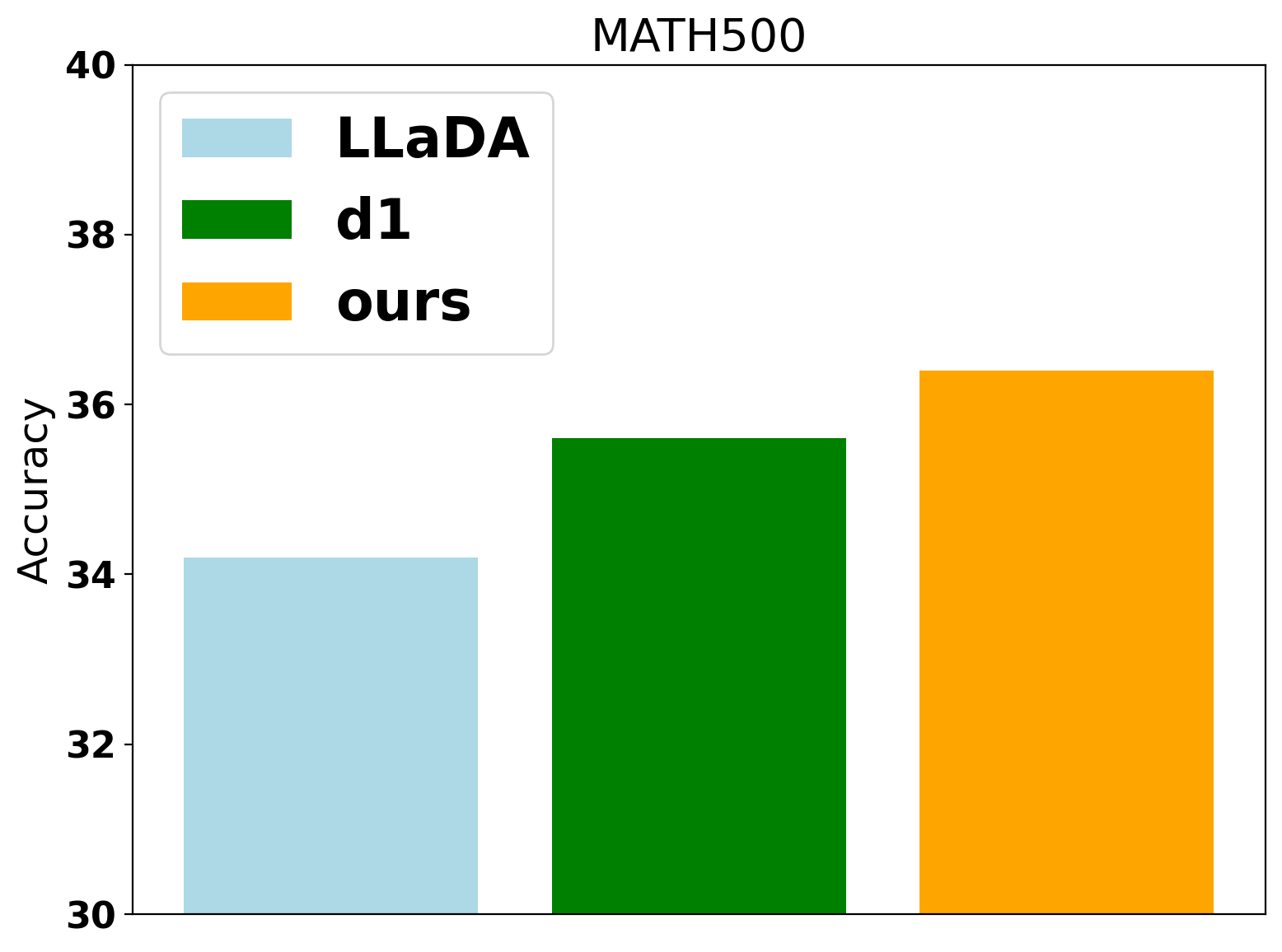}
    \caption{Math}
  \end{subfigure}\hfill
  \caption{Benchmark results of RL post-trained models across different math and planning tasks.}
  \label{fig:Benchmark Results Plots}
\end{figure}

(2) 
Secondly, we innovatively propose to adopt efficient dLLMs samplers in RL post-training. Unlike prior work which post-trains the models based on a fixed (often inefficient) sampler, we directly train the model upon efficiently dLLMs samplers to inventive the base model to reason better and faster. To avoid overfitting to the utilized samplers, we propose to train a prompt-aware inference threshold motivated by the Entropy-Bounded (EB) sampler proposed in \citet{ben2025accelerated} via RL as opposed to a fixed threshold across different prompts. This approach naturally leverages the structural property of masked dLLMs for multi-token predictions, and lets the model perform inference based on a predicted ``inference threshold'' for the prompts. We enable efficient joint RL training of the model and sampler by treating the inference threshold as an additional token, which is thus compatible with our earlier proposed RL framework. We showcase that jointly training the model and sampler can yield improved post-trained models with better performance/accuracy under the same or even less compute than training the model only, unlocking a novel direction for further research to investigate mechanisms between model and samplers in RL post training.

We term our pipeline as \textbf{DiFF}PO, \textbf{Di}ffusion \textbf{F}ast and \textbf{F}urious Policy Optimization, for training masked dLLMs to reason not only \textit{better (furious)}, but also \textit{faster} via RL. 
We demonstrate the effectiveness of DiFFPO in enhancing the inference-time performance of diffusion LLMs on benchmarking math and planning tasks, and clearly push the boundaries of designing scalable RL algorithms towards training efficient and capable LRMs. The rest of the paper is organized as follows: In Section \ref{sc2}, we review the preliminaries of masked dLLMs, samplers, and RL for discrete diffusion models. Section \ref{sc3} presents our detailed algorithm of DiFFPO, with experimental results provided in Section \ref{sc4}. In Section \ref{sc:related works} we discuss other relevant references and conclude with Section \ref{sc:discussions}.

\section{Preliminaries}
\label{sc2}
In this work, we focus on {\em masked} discrete diffusion language models for simplicity,
which has been shown to achieve the best performance among different formulations of dLLMs. Our framework is general and can be adapted to other form of discrete diffusion models as well. For an introduction to dLLMs, we follow the presentations in MDLM \citep{sahoo2024simple}.

\textbf{Masked Diffusion Models}. 
Let \texttt{[mask]} be an extra special token additional to the token vocabulary $\mathcal{V}$,
and denote by $\boldsymbol{m}$ the one-hot representation of this mask token.
The forward processes of MDLM \citep{austin2021structured,sahoo2024simple} interpolate between the clean data distribution $\vx \sim p_{\text{data}}(\cdot)$ 
and a target non-informative distribution $\operatorname{Cat}(\cdot; \boldsymbol{m})$ (the categorical distribution).
The latent variable $\vz_t$ with $t \in[0,1]$ in the forward process is:
\begin{equation}
q\left(\vz_t\mid \vx\right)=\operatorname{Cat}\left(\mathbf{z}_t ; \alpha_t \vx +\left(1-\alpha_t\right) \boldsymbol{m}\right),
\end{equation}
where $\alpha_t \in[0,1]$ is a strictly decreasing function in $t$, with $\alpha_0 \approx 1$ and $\alpha_1 \approx 0$. 
This process can be viewed as a masking process, because $q\left(\vz_1\mid\vx\right) \approx \bm{m}$.
The posterior of the masking process $(t > s)$ can be explicitly computed as (see e.g., \citet{sahoo2024simple} for a derivation):
\begin{equation}
\label{MDM posterior}
q\left(\vz_s \mid \vz_t, \vx\right)= \begin{cases}\operatorname{Cat}\left(\vz_s ; \vz_t\right) & \vz_t \neq \boldsymbol{m}, \\ \operatorname{Cat}\left(\mathbf{z}_s ; \frac{1-\alpha_s}{1-\alpha_t} \boldsymbol{m}+\frac{\alpha_s-\alpha_t}{1-\alpha_t} \vx \right)& \vz_t=\boldsymbol{m}.\end{cases}
\end{equation}
Since $\vx$ is unknown, MDLMs learn a function approximation $\vx_{\theta}(\vz_t,t)$,
and approximate the true posterior with $p_{\theta}\left(\vz_s \mid \vz_t\right)$ by replacing $\vx$ with approximated $\vx_{\theta}$ in \Eqref{MDM posterior} when $\vz_t=\boldsymbol{m}$.

\textbf{dLLMs efficient samplers}. For masked dLLMs, the backward process needs to choose among unmasked positions which to unmask first. This leaves an extra degree of freedom for designing efficient dLLMs samplers. Most existing works focus on a fixed number of multi-token predictions, such as random sampling or Top-$k$ sampling \citep{LLaDA} by choosing the position based on some pre-defined scores. The scores are generally chosen from proxy metrics like confidence $s_l=\max_{v\in \mathcal{V}}p_l(v)$ or negative entropy $-\mathcal{H}(p_l)$ of the predictive distribution of dLLMs $p_l\in \Delta^{|\mathcal{V}|-1}$ on the position $l$. These Top-$k$ based samplers are shown to outperform randomly choosing $k$ positions \citep{kim2025train}. Above mechanism is also referred as `remasking' in \citep{LLaDA} if we think of the equivalent procedure by unmasking every position and remasking those with low scores.

In this work, 
our sampling scheme mainly follows the Entropy-Bounded (EB) sampler proposed in \citet{ben2025accelerated}, which yields better accuracy and efficiency tradeoff than the Top-$k$ sampler that sweeps over different $k$'s. 
When jointly unmasking a sequence of variables $X=(\vx^1,\cdots,\vx^L)$,
the EB-sampler computes the entropy of the predictive distribution of dLLMs at each unmasked position as a cost of that position $c(l)=H(p^\theta(\vx^l \mid X))$, and then unmask all positions in $U$ with the maximum cardinality: 
$\sum_{l \in U} c(\ell) \leq \gamma$, 
where $\gamma$ is a predetermined threshold hyperparameter. If $\min_{l \in U} c(\ell) > \gamma$, EB sampler will only unmask the position with the smallest cost. Intuitively, small $\gamma$ will lead to behaviours similar to Top-1 sampling, while large $\gamma$ will allow unmasking of several positions at the same time. Different $\gamma$ forms an inference-time compute frontier for dLLMs.

\textbf{RL for dLLMs}. 
RLVR aims at training models to maximize the expected reward of the policy generations plus an KL-regularization term via Reinforcement Learning \citep{sutton1998reinforcement}, i.e.,
\begin{equation}
\mathbb{E}_{c\sim\mathcal{D}, o\sim\pi_{\theta}(\cdot\mid c)}\left[r(c,o)-\beta \operatorname{KL}(\pi_{\theta}(\cdot\mid c),\pi_{\theta_{\text{ref}}}(\cdot\mid c))\right], 
\end{equation}
where $r(c, o)$ is the verifiable reward of generation $o$ with respect to the prompt $c$ (sampled from the population $\mathcal{D}$), such as correctness or unit test success rates, $\operatorname{KL}(p,q)$ denotes the Kullback–Leibler divergence between two distributions $p$ and $q$, and $\beta>0$ is the KL penalty constant. 
GRPO \citep{shao2024deepseekmath}, as a variant of REINFORCE \citep{sutton1999policy} and PPO \citep{schulman2017proximal}, first samples a group of outputs $\left\{o_i\right\}_{i=1}^G$ from the behaviour (old) policy $\pi_{\theta_{\text {old }}}$ under the same prompt $c$, and computes the normalized advantage function $A_i=\left(r\left(c, o_i\right)-\frac{1}{G} \sum_{j=1}^G r\left(c, o_j\right)\right)/\sigma$, in which $\sigma$ is the standard deviation of the group rewards. Denote $o^{<k}$ as tokens already generated before the $k$th token is decoded within completion $o$, GRPO maximizes the following token level objective:
\begin{equation}
\label{GRPO-loss}
\mathcal{J}_{\mathrm{GRPO}}(\theta)=\mathbb{E}_{c\sim\mathcal{D}, o_i \sim \pi_{\theta_{\text {old }}}(\cdot\mid c)}\sum_{i=1}^G \frac{1}{\left|o_i\right|}\sum_{k=1}^{\left|o_i\right|} \min \left(\rho_i^k A_i, \operatorname{clip}\left(\rho_i^k, 1-\varepsilon, 1+\varepsilon\right) A_i\right),
\end{equation}
where $\rho_i^k=\pi_\theta\left(o_i^k \mid c, o_i^{<k}\right)/\pi_{\text {old }}\left(o_i^k \mid c, o_i^{<k}\right)$ is the token-level likelihood ratio. Notably we omit the KL divergence term in \Eqref{GRPO-loss} onwards for simplicity, since it has been found to be less impactful in the reasoning performance, see e.g., Magistral \citep{rastogi2025magistral}.
Since $\rho_i^k$ is inefficient to compute for every token in practice, 
d1 \citep{zhao2025d1} proposed a {\em mean-field approximation} 
$\hat{\rho}_i^k=\pi_\theta\left(o_i^k \mid c\right)/\pi_{\text {old }}\left(o_i^k \mid c\right)$ to replace $\rho_i^k$ in \Eqref{GRPO-loss}, which is easily computable from dLLMs. 
In essence, the likelihood ratio used in \texttt{d1} only conditions on the prompts $c$, omitting the already unmasked tokens $o_i^{<k}$ which dLLMs actually condition on during generation.

\section{DiFFPO}
\label{sc3}
In this section, we present the derivation of our Reinforcement Learning algorithm including the optimization loss function and the update procedures.
\subsection{Improved techniques in sample efficient model post-training}

For the sampler of the dLLM, we first consider a fixed Top-$k$ confidence based remasking, which is how the completions of prompts/questions are generated. We use $k=1$ for simplicity here to derive the loss objectives without the loss of generality.

\textbf{Motivation}. We first revisit the objective of \texttt{d1} \citep{zhao2025d1} from a likelihood approximation perspective. The loss objective of diffu-GRPO algorithm is:
\begin{equation}
\label{diffu-GRPO-loss}
\mathcal{J}_{\mathrm{d1}}(\theta)=\mathbb{E}_{c\sim\mathcal{D}, o_i \sim \pi_{\theta_{\text {old }}}}\sum_{i=1}^G \frac{1}{\left|o_i\right|}\sum_{k=1}^{\left|o_i\right|} \min \left( \frac{\pi_\theta\left(o_i^k \mid c\right)}{\pi_{\text {old }}\left(o_i^k \mid c\right)}A_i, \operatorname{clip}(\frac{\pi_\theta\left(o_i^k \mid c\right)}{\pi_{\text {old }}\left(o_i^k \mid c\right)}, 1-\varepsilon, 1+\varepsilon) A_i\right),
\end{equation}
in which $\pi_\theta\left(o_i^k \mid c\right)$ is the approximation to true conditional token likelihood, and referred by \cite{zhao2025d1} as {\em mean-field approximation}. This approximation has its own advantage of being simple and memory-efficient to compute, making it appealing to be utilized for update at loss optimization, especially when fine-tuning large language models with massive scale. 

However, there are two shortcomings in this approximation. \textbf{Firstly}, there is a clear {\it mismatch} between this and the true likelihood, since the dLLMs need to condition on already unmasked tokens, which is omitted in the mean-field approximation. \textbf{Secondly}, hardly any theoretical performance guarantees can be made for the policy obtained from minimizing the loss in \Eqref{diffu-GRPO-loss}. 

To illustrate these limitations, we first change the index from token number to time index to express our loss in a standard way in the discrete diffusion literature, which eliminates the ambiguity in the generative process.\footnote{There is no such ambiguity for AR LLMs, since the generation will be always from left to right.} For notations, we denote $t_i^k$ to be the actual timestep when the the $k^{\text{th}}$ token in completion $o_i$ is unmasked (thus $1\leq t_i^k \leq |o_i|$), and we use $z^t_i$ to denote the sequence latents at time $t$. For a discrete diffusion model / policy, we can write an equivalent loss to the GRPO objective \citep{shao2024deepseekmath} for dLLM as $\mathcal{J}_{\mathrm{dLLM-GRPO}}(\theta)=$:
\begin{equation}
\label{GRPO-loss for discrete diffusion}
\mathbb{E}\sum_{i=1}^G \frac{1}{\left|o_i\right|}\sum_{k=1}^{\left|o_i\right|} \min \left( \frac{\pi_\theta\left(o_i^k \mid c, z^{t^k_i-1}_i\right)}{\pi_{\text {old }}\left(o_i^k \mid c, z^{t^k_i-1}_i \right)}A_i, \operatorname{clip}(\frac{\pi_\theta\left(o_i^k \mid c, z^{t^k_i-1}_i \right)}{\pi_{\text {old }}\left(o_i^k \mid c, z^{t^k_i-1}_i\right)}, 1-\varepsilon, 1+\varepsilon) A_i\right),
\end{equation}
which is equivalent to (we now denote $\hat{o}_i^t$ as the difference between $z^{t}_i$ and $z^{t-1}_i$):
\begin{equation}
\label{GRPO-loss for discrete diffusion time indexed}
\mathbb{E}\sum_{i=1}^G \frac{1}{\left|o_i\right|}\sum_{t=1}^{\left|o_i\right|} \min \left( \frac{\pi_\theta\left(\hat{o}_i^{t} \mid c, z^{t-1}_i\right)}{\pi_{\text {old }}\left(\hat{o}_i^{t} \mid c, z^{t-1}_i \right)}A_i, \operatorname{clip}(\frac{\pi_\theta\left(\hat{o}_i^{t} \mid c, z^{t-1}_i \right)}{\pi_{\text {old }}\left(\hat{o}_i^{t} \mid c, z^{t-1}_i\right)}, 1-\varepsilon, 1+\varepsilon) A_i\right).
\end{equation}
The \texttt{d1} objective can be interpreted as removing all $z^{t-1}_i$ terms in the loss objective \Eqref{GRPO-loss for discrete diffusion time indexed}. However, as shown through generated sample example in Figure \ref{fig:generation sample} and average KL divergence plot in Figure \ref{fig:increasing error}, when the timesteps increase, there is growing mismatch between the true conditional likelihood $\pi_\theta(\hat{o}_i^{t} \mid c, z^{t-1}_i)$ and mean-field approximation $\pi_\theta(\hat{o}_i^{t} \mid c)$. The average KL divergence between these two distributions grows monotonicly with respect to $t$, likely due to the accumulated effect of neglecting the latents.

\begin{figure}[!htbp]
  \centering
  \begin{subfigure}[t]{0.6\textwidth}
    
    \includegraphics[width=\linewidth]{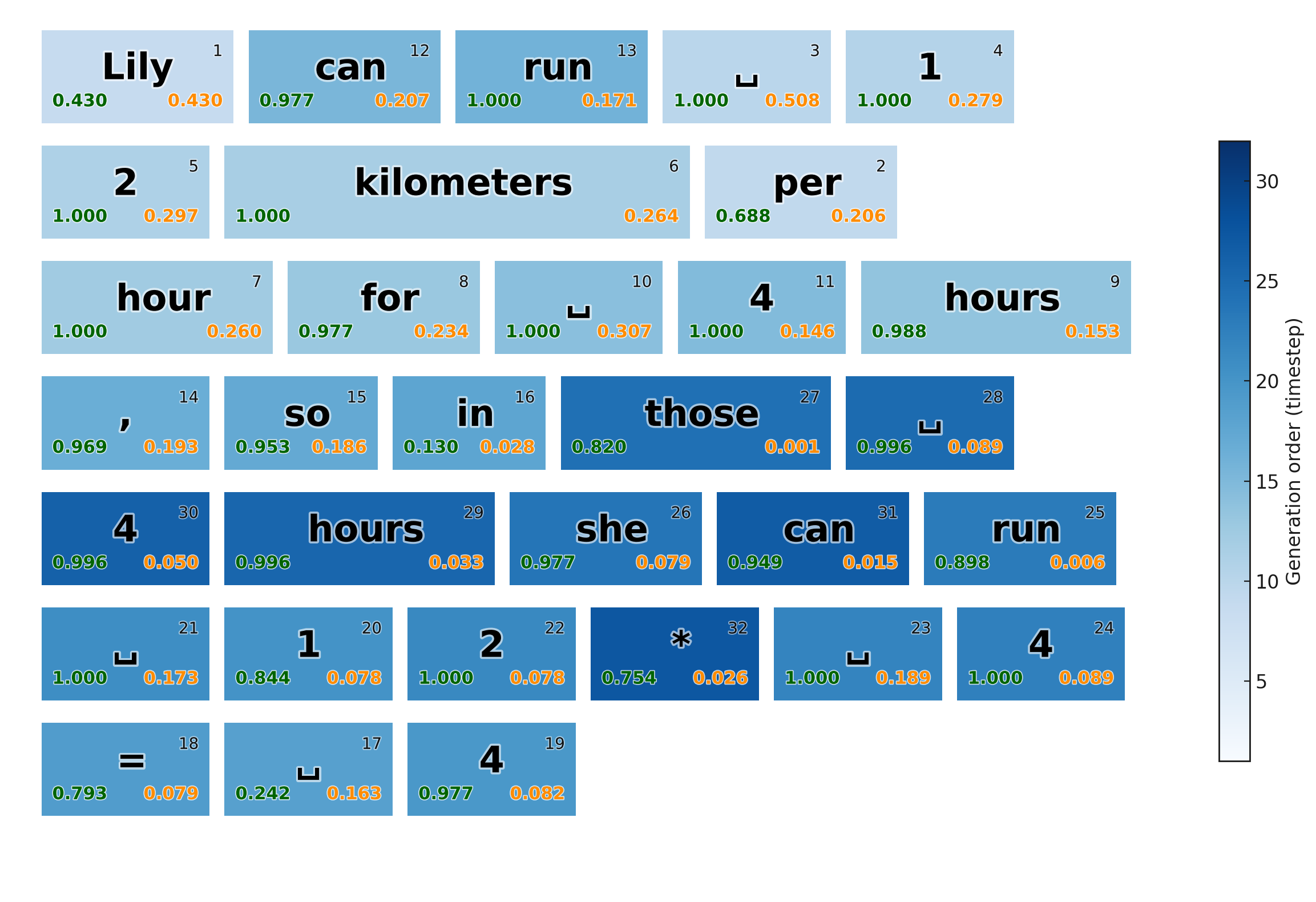}
    \caption{Sample Generation and Computed Likelihoods: For each block, for example the first one, black part ``Lily" represents the unmasked token, green number (0.430) represents the true likelihood when the token gets unmasked, and the orange number (0.430) denotes the mean-field approximation likelihood \label{fig:generation sample}}
  \end{subfigure}\hspace{0.02\textwidth}%
  \begin{minipage}[t]{0.3\textwidth} \vspace{-170pt}%
    \subcaptionbox{Average KL between the true likelihoods and approximated likelihoods at each timestep. \label{fig:increasing error}}[\linewidth]{%
      \includegraphics[width=\linewidth]{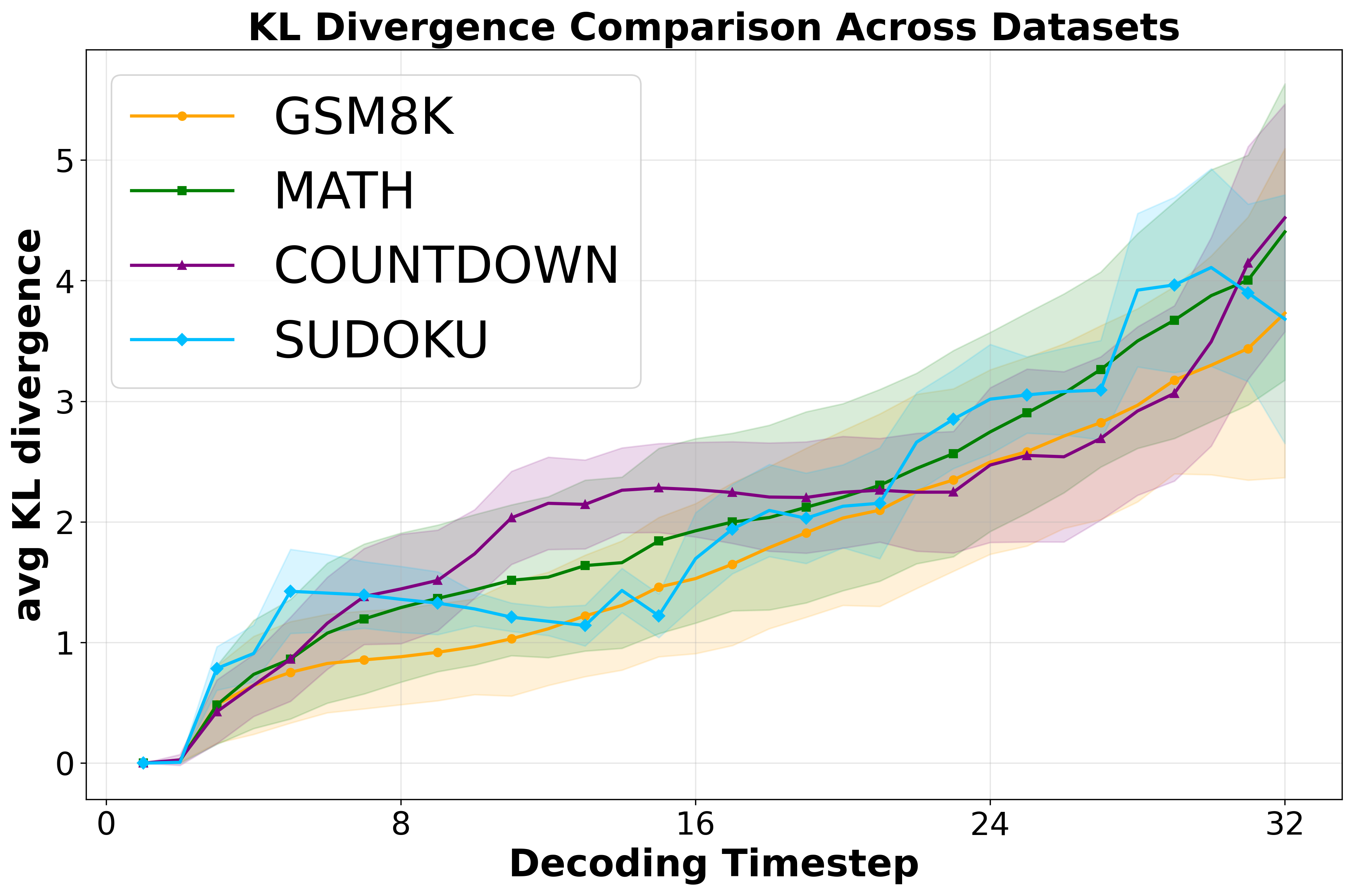}}
    \vspace{5pt}
    \subcaptionbox{Comparison of 2-MF approximation with baseline mean field approximation.
    \label{fig:error MF vs 2MF gsm8k}}[\linewidth]{%
      \includegraphics[width=\linewidth]{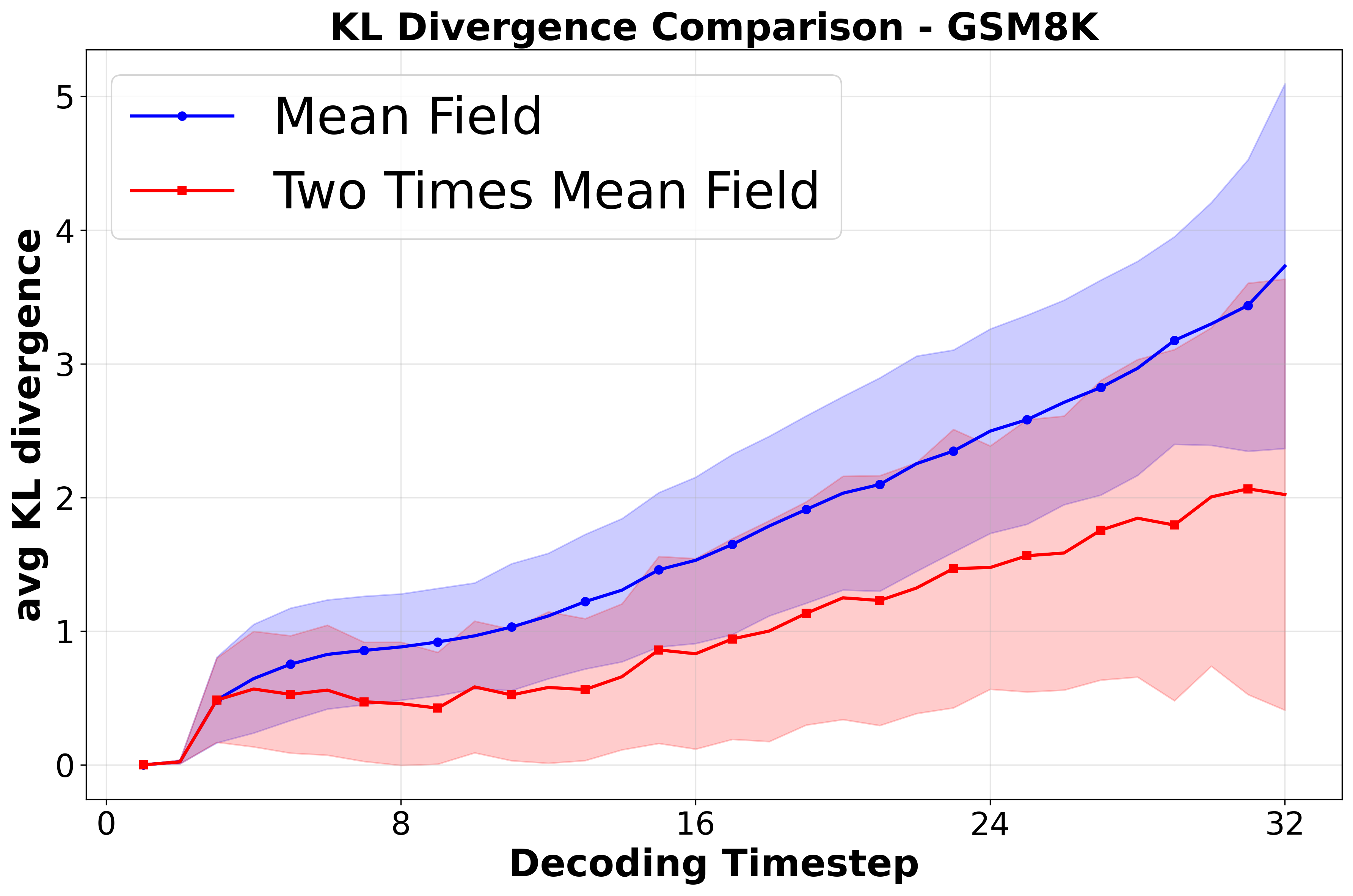}}
  \end{minipage}
    \vspace{-10pt}
\end{figure}

\begin{wrapfigure}{r}{0.25\linewidth}   
    \centering
    \vspace{-20pt}
    \includegraphics[width=\linewidth]{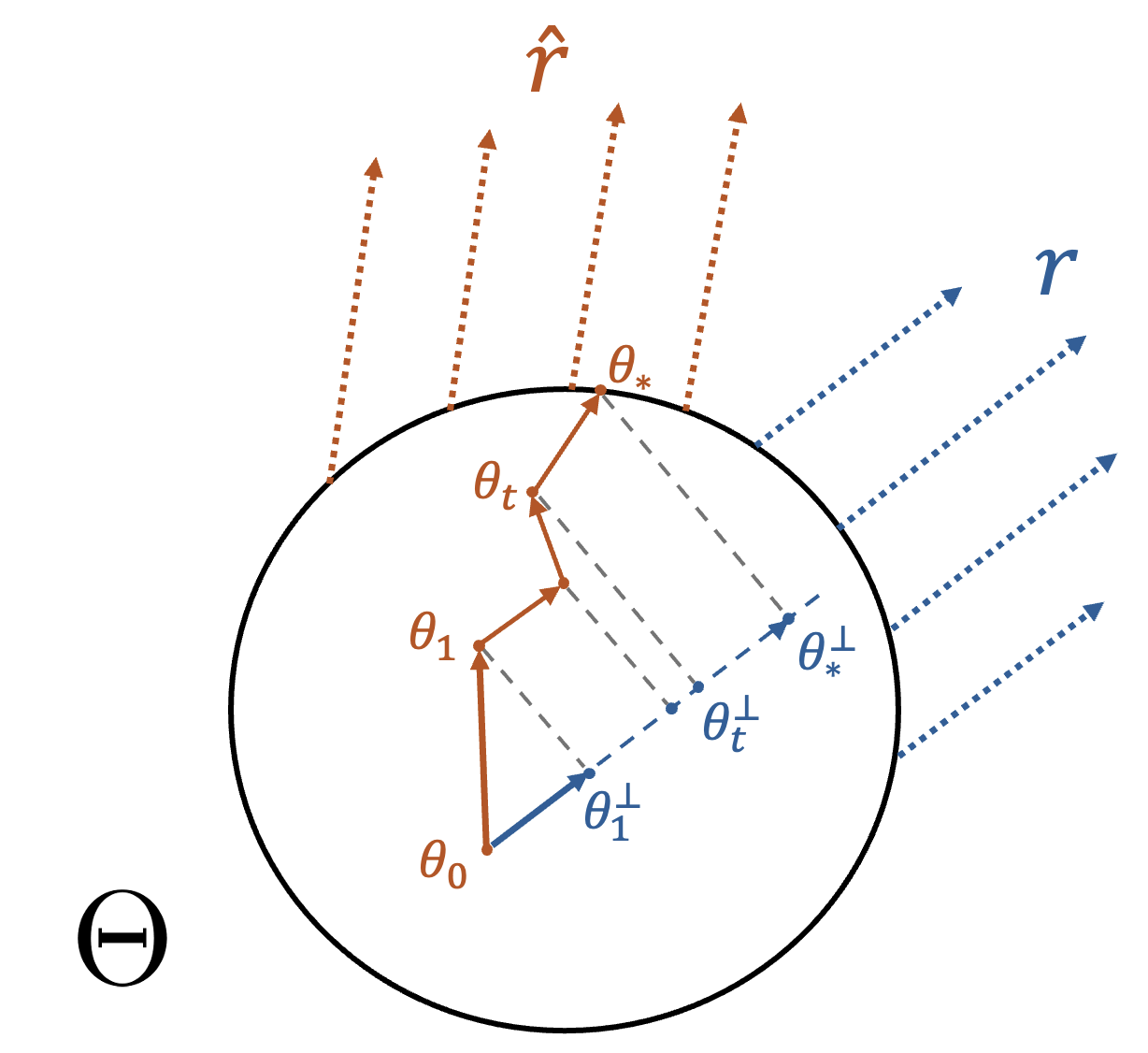}
    \caption{Visualization of off-policy RL for dLLM through optimizing the surrogate policy}
    \label{fig:off-policy-rl}
    \vspace{-20pt}
\end{wrapfigure}

\textbf{Rethinking via off-policy RL}. 
Given the crude approximations utilized in \texttt{d1} and the lack of any theoretical guarantee, we propose to formulate the RL task for dLLMs from an off-policy RL perspective by noticing the unique structure of the dLLMs.
Since the true likelihood of dLLMs is inefficient to compute, our aim is to bypass it by:
(1) find a surrogate policy that is close to the actual dLLM policy,
while its conditional likelihood is tractable to compute;
(2) optimize the surrogate policy via off-policy RL. 
A visualization of our pipeline can be found in Figure \ref{fig:off-policy-rl}: we assume that the direction of $r$ ($\hat{r}$) encourages the true dLLM (surrogate) policy to generate higher average reward separately. Then if we optimize the surrogate policy following $\hat{r}$, the performances of true dLLM policy under the same model weights $\theta_t$ gets improved if the projections of it to the linear space $\theta_0+r$, i.e. $\theta_t^\perp$, move along the direction of $r$.

Before we discuss how we choose the surrogate policies, we highlight that our current pipeline could provide a worst-case performance guarantee on the obtained fine-tuned models:
\begin{theorem}
\label{Closeness theorem}
Assume two parameterized policies by different model family yet share the same parameters being $\pi_{\theta},\hat{\pi}_{\theta}$ respectively, with $\theta\in\Theta$. Further assuming that reward function is postive and upper bounded, i.e. there exists $M>0$, such that $0\leq r(c,o)\leq M$ for any $c$ and $o$. Then given that $\max_{\theta \in \Theta}\operatorname{KL}(\hat{\pi}_{\theta}\|\pi_{\theta})\leq \epsilon^2$, we have that for any policy $\theta\in\Theta$:
\begin{equation}
    \mathbb{E}_{o\sim\pi_{\theta}(\cdot\mid c)}r(c,o)\geq \mathbb{E}_{o\sim\hat{\pi}_{\theta}(\cdot\mid c)}r(c,o)- \sqrt{2} M \epsilon.
\end{equation}
\end{theorem}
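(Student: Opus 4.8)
The plan is to reduce the claimed reward gap to the total variation distance between the two policies and then invoke Pinsker's inequality to translate the KL hypothesis into a total-variation bound. The entire argument can be carried out pointwise for a fixed prompt $c$ and a fixed $\theta$, since the stated hypothesis $\max_{\theta\in\Theta}\operatorname{KL}(\hat\pi_\theta\|\pi_\theta)\leq\epsilon^2$ already supplies the required uniform control over $\Theta$.

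First I would write the reward gap as a single sum against the signed measure $\pi_\theta-\hat\pi_\theta$:
\begin{equation*}
\mathbb{E}_{o\sim\pi_\theta(\cdot\mid c)}r(c,o)-\mathbb{E}_{o\sim\hat\pi_\theta(\cdot\mid c)}r(c,o)=\sum_{o}\bigl(\pi_\theta(o\mid c)-\hat\pi_\theta(o\mid c)\bigr)\,r(c,o).
\end{equation*}
Using only the crude bound $0\leq r(c,o)\leq M$ and the triangle inequality, the right-hand side is at most $M\sum_{o}\lvert\pi_\theta(o\mid c)-\hat\pi_\theta(o\mid c)\rvert=2M\cdot\mathrm{TV}(\pi_\theta,\hat\pi_\theta)$, where $\mathrm{TV}$ denotes the total variation distance. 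I deliberately avoid centering $r$ by its midpoint here; centering would sharpen the constant to $M/\sqrt2$, but the factor $2M$ is exactly what produces the advertised $\sqrt2\,M\epsilon$.

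Next I would apply Pinsker's inequality, $\mathrm{TV}(\hat\pi_\theta,\pi_\theta)\leq\sqrt{\tfrac12\operatorname{KL}(\hat\pi_\theta\|\pi_\theta)}$, together with the hypothesis $\operatorname{KL}(\hat\pi_\theta\|\pi_\theta)\leq\epsilon^2$, to conclude $\mathrm{TV}(\pi_\theta,\hat\pi_\theta)\leq\epsilon/\sqrt2$; total variation is symmetric, so the direction of the KL is immaterial. Substituting into the previous display yields a two-sided bound of magnitude $2M\cdot\epsilon/\sqrt2=\sqrt2\,M\epsilon$, and retaining the relevant one-sided direction gives exactly $\mathbb{E}_{o\sim\pi_\theta(\cdot\mid c)}r(c,o)\geq\mathbb{E}_{o\sim\hat\pi_\theta(\cdot\mid c)}r(c,o)-\sqrt2\,M\epsilon$.

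I do not anticipate a genuine obstacle, as this is a standard ``bounded test function integrated against two nearby distributions'' estimate. The only points needing care are matching the advertised constant, which forces the un-centered $2M$ bound rather than the tighter centered one, and confirming that Pinsker applies in the stated KL direction, which is immediate since total variation is symmetric. If one instead wanted an absolute value on the left-hand side, the identical argument goes through verbatim, since every inequality invoked is two-sided up until the final step.
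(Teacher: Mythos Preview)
Your proposal is correct and matches the paper's own proof, which the authors summarize in one line as ``a straightforward application of Pinsker's inequality.'' Your explicit decomposition---bounding the reward gap by $2M\cdot\mathrm{TV}$ via the un-centered triangle inequality and then invoking Pinsker to convert the KL hypothesis into $\mathrm{TV}\leq\epsilon/\sqrt{2}$---is exactly the intended route, and your remark that centering would sharpen the constant to $M\epsilon/\sqrt{2}$ is a correct side observation the paper does not mention.
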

The proof is a straightforward application of Pinsker's inequality. Yet, a direct corollary of Theorem \ref{Closeness theorem} implies that, the worst-case of the performance of $\pi^{\theta}$ can still be guaranteed if $\mathbb{E}_{o\sim\hat{\pi}_{\theta}(\cdot\mid c)}r(c,o)$ is large (e.g. after RL optimization) and two policies are close. Back to the dLLM setup, if we take a policy $\hat{\pi}_{\theta}$ which generates tokens on every position by conditioning only on the prompt, then 
this is the exact policy \texttt{d1} utilized for likelihood approximation, which we have shown to be too coarse with a large KL-divergence. Thus we are motivated to construct {\it better} surrogate policies to make it closer to the true dLLM policy.

\textbf{Additional conditioning latents}.
We propose to condition on one additional latent at a randomly sampled timestep $\tau\in[0,T)$ at each optimization step, and consider a new surrogate policy which samples tokens according to the \texttt{d1} likelihoods until $t = \tau$. When $t > \tau$, we sample tokens by conditioning on both prompts and latents $z^{\tau}$. 
Conditioning on the randomly drawn timestep $\tau$, we define the likelihood of the completion as $\pi_\theta(\hat{o}_i^{t} \mid c, z^{s_{\tau}(t)}_i)$
where we define a step function $s_{\tau}$ such that $s(t)=\tau$ if $t>\tau$, and $0$ otherwise. 
We can interpret this as an alternative approximation to the true dLLM likelihood. However, instead of only conditioning on prompts, when $t> \tau$ the generation will also condition on a randomly drawn latent $z^\tau$. 
We refer to this as Two Times Mean-Field Approximation (abbr. as 2-MF). 
Compared to \texttt{d1} \citep{zhao2025d1}, two times mean-field yields a strictly better likelihood approximation. We theoretically characterize its benefits by the following assumptions and theorems.
\begin{assumption}\label{assumption:2tmf}
Assuming that for any trajectory $z_i^t(t=0,\cdots,T)$ and three timesteps $s<\ell<t$,
\begin{equation}
    f(s;t):=\operatorname{KL}(\pi_\theta(\hat{o}_i^{t} \mid c, z^{t-1}_i)||\pi_\theta(\hat{o}_i^{t} \mid c, z^{s}_i)) \geq \operatorname{KL}(\pi_\theta(\hat{o}_i^{t} \mid c, z^{t-1}_i)||\pi_\theta(\hat{o}_i^{t} \mid c, z^{\ell}_i))=f(\ell;t),
\end{equation}
i.e. $f(\tau; t)$ is a monotonously decreasing function with respect to $\tau\in [0,t]$.
\end{assumption}
Assumption \ref{assumption:2tmf} assumes that a more recent latent on the same generation trajectory will be more informative than older ones. It is reasonable since the older latents are closer to the non-informative priors. Based on this assumption, we characterize the theoretical benefits of our approximation:
\begin{theorem}
Under Assumption \ref{assumption:2tmf}, we have that two times mean-field approximation provides better approximation than only conditioning on prompts, i.e. for any $t$, we have:
\begin{equation}
    \mathbb{E}_{\tau,t}\operatorname{KL}(\pi_\theta(\hat{o}_i^{t}\mid c, z^{t-1}_i)||\pi_\theta(\hat{o}_i^{t} \mid c, z^{s_{\tau}(t)}_i))\leq \mathbb{E}_t\operatorname{KL}(\pi_\theta(\hat{o}_i^{t} \mid c, z^{t-1}_i)||\pi_\theta(\hat{o}_i^{t} \mid c)).
\end{equation}
\end{theorem}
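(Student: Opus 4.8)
The plan is to prove the inequality by a pointwise domination argument: I will show that, for every fixed pair $(\tau,t)$, the 2-MF integrand is bounded above by the d1 integrand, after which the claim follows immediately by taking expectations and using monotonicity of expectation. The work is carried almost entirely by Assumption \ref{assumption:2tmf}; the only genuine care is needed in handling the case split induced by the step function $s_\tau$.

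First I would unpack $s_\tau$. For a fixed $t$, the conditioning timestep $s_\tau(t)$ takes exactly one of two values: if $\tau < t$ then $s_\tau(t)=\tau$, while if $\tau \ge t$ then $s_\tau(t)=0$. Writing $f(s;t):=\operatorname{KL}(\pi_\theta(\hat{o}_i^{t}\mid c, z^{t-1}_i)\,\|\,\pi_\theta(\hat{o}_i^{t}\mid c, z^{s}_i))$ exactly as in Assumption \ref{assumption:2tmf}, the 2-MF integrand therefore equals $f(\tau;t)$ on the event $\{\tau<t\}$ and equals $f(0;t)$ on the event $\{\tau\ge t\}$. Next I would identify the right-hand integrand with the $\tau=0$ endpoint: since $z^0_i$ is the fully masked initial latent of the reverse process and carries no content information, we have $\pi_\theta(\hat{o}_i^{t}\mid c, z^{0}_i)=\pi_\theta(\hat{o}_i^{t}\mid c)$, so that $\operatorname{KL}(\pi_\theta(\hat{o}_i^{t}\mid c, z^{t-1}_i)\,\|\,\pi_\theta(\hat{o}_i^{t}\mid c))=f(0;t)$. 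This reinterpretation is what aligns the d1 error with the left endpoint of $f(\cdot;t)$.

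The pointwise bound then follows in both regimes. On $\{\tau<t\}$, Assumption \ref{assumption:2tmf} says $f(\cdot;t)$ is monotonically decreasing on $[0,t]$, hence $f(\tau;t)\le f(0;t)$, a genuine improvement. On $\{\tau\ge t\}$, the 2-MF integrand is exactly $f(0;t)$, so the bound holds with equality. In either case the 2-MF integrand is at most $f(0;t)$. Taking the expectation over $\tau$ and then over $t$ gives
\[
\mathbb{E}_{\tau,t}\,\operatorname{KL}\!\big(\pi_\theta(\hat{o}_i^{t}\mid c, z^{t-1}_i)\,\|\,\pi_\theta(\hat{o}_i^{t}\mid c, z^{s_\tau(t)}_i)\big)\;\le\;\mathbb{E}_t\, f(0;t),
\]
and, by the identification above, the right-hand side is precisely $\mathbb{E}_t\,\operatorname{KL}(\pi_\theta(\hat{o}_i^{t}\mid c, z^{t-1}_i)\,\|\,\pi_\theta(\hat{o}_i^{t}\mid c))$, which is the desired inequality.

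The hard part is not analytical but bookkeeping: the proof can only go wrong if the case split is mishandled, so I would be explicit that the bound survives the regime $\tau\ge t$, where the surrogate collapses back to conditioning on $c$ alone and contributes no loss (equality), while the strict gain over d1 comes entirely from the regime $\tau<t$. I would also remark that the inequality is strict whenever $f(\cdot;t)$ is strictly decreasing and the event $\{\tau<t\}$ has positive probability under the sampling distribution of $\tau$, which justifies describing 2-MF as a \emph{strictly} better likelihood approximation.
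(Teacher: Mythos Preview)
Your proof is correct and follows essentially the same route as the paper: fix $\tau$, use the monotonicity in Assumption~\ref{assumption:2tmf} to bound the integrand, then take expectations (the paper phrases this as ``tower's rule of expectation'' over $\tau$ followed by the fixed-$\tau$ inequality). Your version is more explicit---you spell out the case split on $s_\tau(t)$ and the identification $\pi_\theta(\hat o_i^t\mid c)=\pi_\theta(\hat o_i^t\mid c,z_i^0)$, both of which the paper leaves implicit---but the underlying argument is identical.
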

\begin{proof}
The inequality could be directly obtained by tower's rule of expectation and proving that 
$$\mathbb{E}_{t}\operatorname{KL}(\pi_\theta(\hat{o}_i^{t}\mid c, z^{t-1}_i)||\pi_\theta(\hat{o}_i^{t} \mid c, z^{s_{\tau}(t)}_i))\leq \mathbb{E}_t\operatorname{KL}(\pi_\theta(\hat{o}_i^{t} \mid c, z^{t-1}_i)||\pi_\theta(\hat{o}_i^{t} \mid c))$$
holds for any fixed $\tau$. This can be easily obtained from Assumption \ref{assumption:2tmf}.
\end{proof}
In Figure \ref{fig:error MF vs 2MF gsm8k}, we show the clear empirical evidence on GSM8K dataset that two times mean field approximation indeed yields better approximation to the true likelihood in terms of reducing the KL divergence error. We provide similar results for other datasets in Appendix \ref{App: Two Times Mean Field provides better approximation}.

\textbf{Off-policy RL via Importance Sampling}. With a better surrogate policy, we now seek to optimize such policy using off-policy RL. Note that the generations are still sampled from the base policy. We can thus utilize these samples to optimize $\hat{\pi}^{\theta}$ using importance sampling, as for any function $f$, we have:
\begin{equation}
    \mathbb{E}_{o\sim\hat{\pi}_{\theta}(\cdot\mid c)}f(c,o)=\mathbb{E}_{o\sim\pi_{\theta}(\cdot\mid c)}\frac{\hat{\pi}_{\theta}(\cdot\mid c)}{\pi_{\theta}(\cdot\mid c)}f(c,o).
\end{equation}
Then replacing $f$ with the GRPO loss, our DiFFPO loss objective for training the model only is thus:
\begin{equation}\label{eq:diffpo_model_only}
\begin{aligned}
&\mathcal{J}_{\text{DiFFPO-model}}=\\
&\mathbb{E}_{o_i \sim \pi_{\theta_{\text{old}}}, \tau\sim\mathcal{U}[0,T]}\sum_{i=1}^G \frac{1}{\left|o_i\right|}\sum_{t=1}^{\left|o_i\right|}\underbrace{\min(C,\frac{\pi_{\text {old}}(\hat{o}_i^t \mid c, z_i^{s_{\tau}(t)})}{\pi_{\text {old}}(\hat{o}_i^t \mid c, z_i^{t-1})})}_{\text{IS term}}\min \left(\bar{\rho_i}^t A_i, \operatorname{clip}\left(\bar{\rho_i}^t, 1-\varepsilon, 1+\varepsilon\right) A_i\right),
\end{aligned}
\end{equation}
where $\bar{\rho}_i^t=\frac{\pi_\theta\left(\hat{o}_i^t \mid c, z_i^{s_{\tau}(t)}\right)}{\pi_{\text {old }}\left(\hat{o}_i^t \mid c, z_i^{s_{\tau}(t)}\right)}$ is the likelihood ratio of the surrogate policy and we also impose a maximum threshold $C$ over the importance sampling ratio to ensure the numerical stablibity. 

\subsection{Unlocking better adaptive multi-token prediction via sampler post-training}
Common practice of RL post training considers a fixed generative model sampler in optimization. In this work, we however investigate on joint training the sampler together with the model weights for achieving the best performance in enhancing the inference time compute frontier.  

\textbf{Prompt adaptive threshold}. Unlike the EB-sampler \citep{ben2025accelerated} that uses a fixed threshold to decide which token(s) to unmask for all prompts, 
we propose to use a parameterized function to encode prompt features and output a predicted ``inference threshold''.
We obtain the dLLMs embeddings of the prompts by averaging over the hidden dimension features $f^l_\theta(c)$ of each position $l$,
and train a linear mapping $\boldsymbol{w}$ on top of the embeddings (we omit the bias term here for clarity).
We also introduce an upper threshold $\gamma_{\max}$,
and map each feature to the inference threshold by
\begin{equation}
\gamma_{\boldsymbol{w}}(c)=\gamma_{\max}\,\operatorname{sigmoid}\!\left(
\boldsymbol{w}^\top \Big(\tfrac{1}{L}\sum_{l=1}^L f^l_{\theta}(c)\Big) + \beta
\right),
\quad
\text{ with }\beta=\log\frac{\gamma}{\gamma_{\max}-\gamma},
\end{equation}
where $\gamma\in(0,\gamma_{\max})$ is the initial global threshold, since $\gamma_{\boldsymbol{w}_0}(c)=\gamma$ when the initial linear layer weights $\boldsymbol{w}_0$ are set to be all $0$. For RL training, 
we fix a noise level $\sigma$ for Gaussian exploration $\epsilon\sim\mathcal{N}(0,I)$ before applying sigmoid activation and use the perturbed threshold for inference: 
\begin{equation}
\gamma(c)=\gamma_{\max}\,\operatorname{sigmoid}\!\left(
\boldsymbol{w}^\top \Big(\tfrac{1}{L}\sum_{l=1}^L f^l_{\theta}(c)\Big) + \beta + \sigma \epsilon
\right).
\end{equation}

\textbf{Joint Training of the model and sampler}. We use $\pi^{w}_{\theta}$ to represent the sampler threshold policy which is dependent on both model weights $\theta$ and header weights $w$. To joint optimize the model and sampler, we apply a trick by treating the predicted threshold as an additional token to unmask at time step 0. 
Thus, given the group completion $(\gamma_i,o_i)$, we have yield our final DiFFPO loss as:
\begin{equation}
\label{Eq:DiFFPO loss}
\mathcal{J}_{\text{DiffPO}}=\mathcal{J}_{\text{DiffPO-model}}+\mathbb{E}_{c\sim \mathcal{D}}\sum_{i=1}^G \frac{1}{\left|o_i\right|}\min \left(\frac{\pi^w_{\theta_{-}}\left(\gamma_i \mid c\right)}{\pi^{w_{\text{old}}}_{\theta_{-}}\left(\gamma_i \mid c\right)} A_i, \operatorname{clip}(\frac{\pi^w_{\theta_{-}}\left(\gamma_i \mid c\right)}{\pi^{w_{\text{old}}}_{\theta_{-}}\left(\gamma_i \mid c\right)}, 1-\varepsilon, 1+\varepsilon) A_i\right).
\end{equation}

We found that the model learns to generate shorter sequences even without the length penalty involved as in the experiments section, which prevents sensitive hyperparameter tunings. In addition, fixing model weights (i.e. \texttt{stopgrad} on $\theta_{-}$) for the sampler loss in \Eqref{Eq:DiFFPO loss} helps both stable training and better performance.

\section{Experiments}
\label{sc4}
To evaluate the performance of the our proposed RL algorithms, we conducted extensive experiments on training base dLLMs with various reasoning tasks and compared their reasoning capabilities in terms of both accuracy and efficiency after RL post-training.

\paragraph{Experimental Setup.} We choose LLaDA-8B-Instruct \citep{LLaDA}\footnote{\url{https://huggingface.co/GSAI-ML/LLaDA-8B-Instruct}.} as our base model since this is the first diffusion large language model whose performance is on par with AR LLMs (Llama3-8B). Improving over such a capable base model is of great interest to the whole dLLMs community. 
We directly start from the instruct model as opposed to the base model which has not been fine-tuned for instruction following. This avoids any potential confounding effect of us performing additional supervised fine-tuning (SFT) stage which can affect the performance and fair comparison of RL algorithms. 

We evaluate all the methods on math and planning benchmark tasks including GSM8K, Math, Sudoku, and Countdown. 
We use LoRA \citep{hu2022lora} with rank $r=128$ for parameter-efficient training and also adopt the 4-bit quantized version of the model for efficiency, same as \texttt{d1}.  Our training is conducted on 8 NVIDIA-A100 40GB GPUs, batch size of 3 per GPU and use gradient accumulation steps of 4. We grid search the best learning rate and clip ratio for each method, and adopt the same all other hyparameters utilized in \texttt{d1} \citep{zhao2025d1} like AdamW optimizer for a fair comparison. In addition, we use the same five random reeds for the baseline \texttt{d1} and our proposed algorithm, and report the best performing model among different seeds and checkpoints.

For the dLLM inference setup, our experiments are all based on block size 32 and maximum sequence length of 256. We adopt a Top-$k$ confidence based remasking mechanism for inference when training the model only without training the sampler. We choose $k=2$, the same setup as \texttt{d1}, resulting the effective diffusion steps being 128.

\textbf{Improved sample efficiency and task performance}. 
We first showcase the effectiveness of our algorithm when training a DiFFPO model with \Eqref{eq:diffpo_model_only} on math and planning benchmarks. 

\begin{figure}[!htbp]
  \centering
  \begin{subfigure}[t]{0.24\textwidth}
    \includegraphics[width=\linewidth]{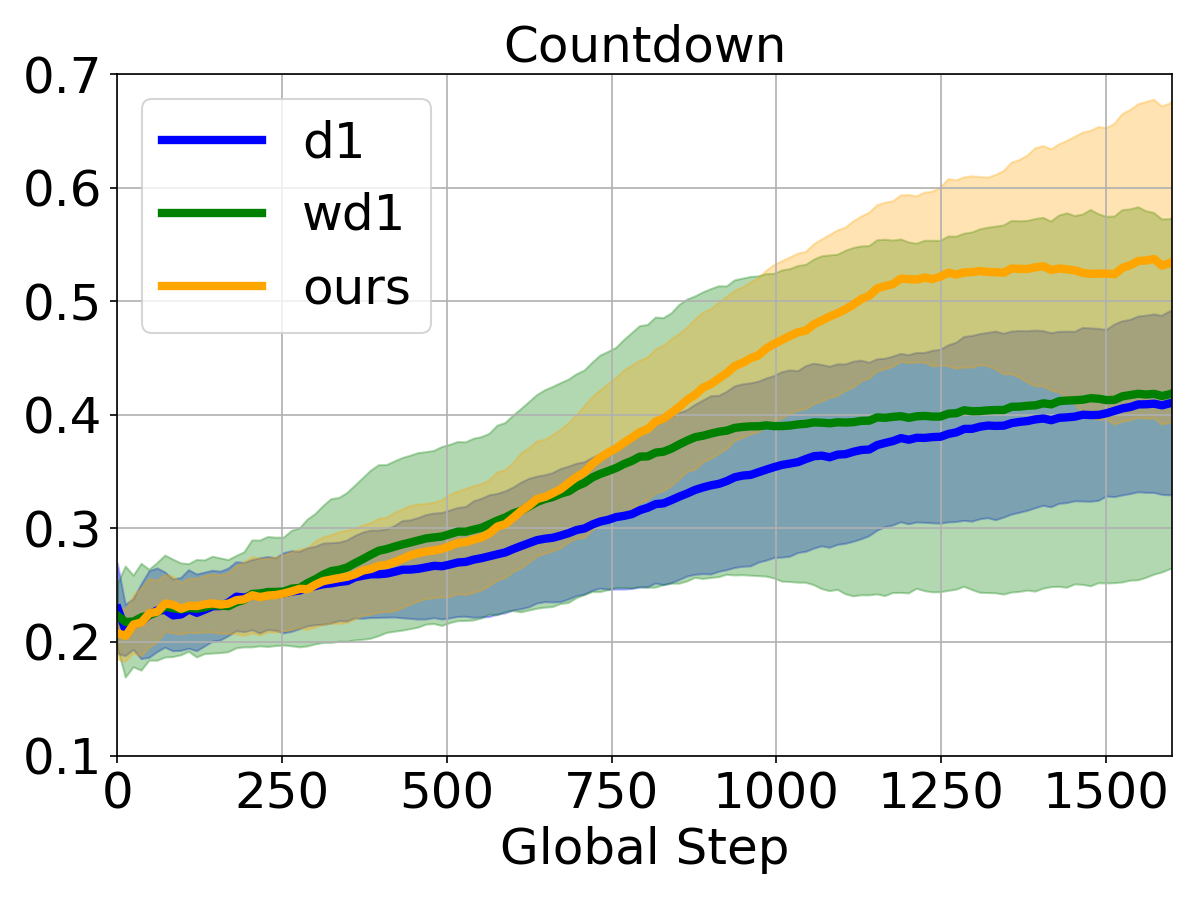}
    \caption{Countdown}
  \end{subfigure}\hfill
  \begin{subfigure}[t]{0.24\textwidth}
    \includegraphics[width=\linewidth]{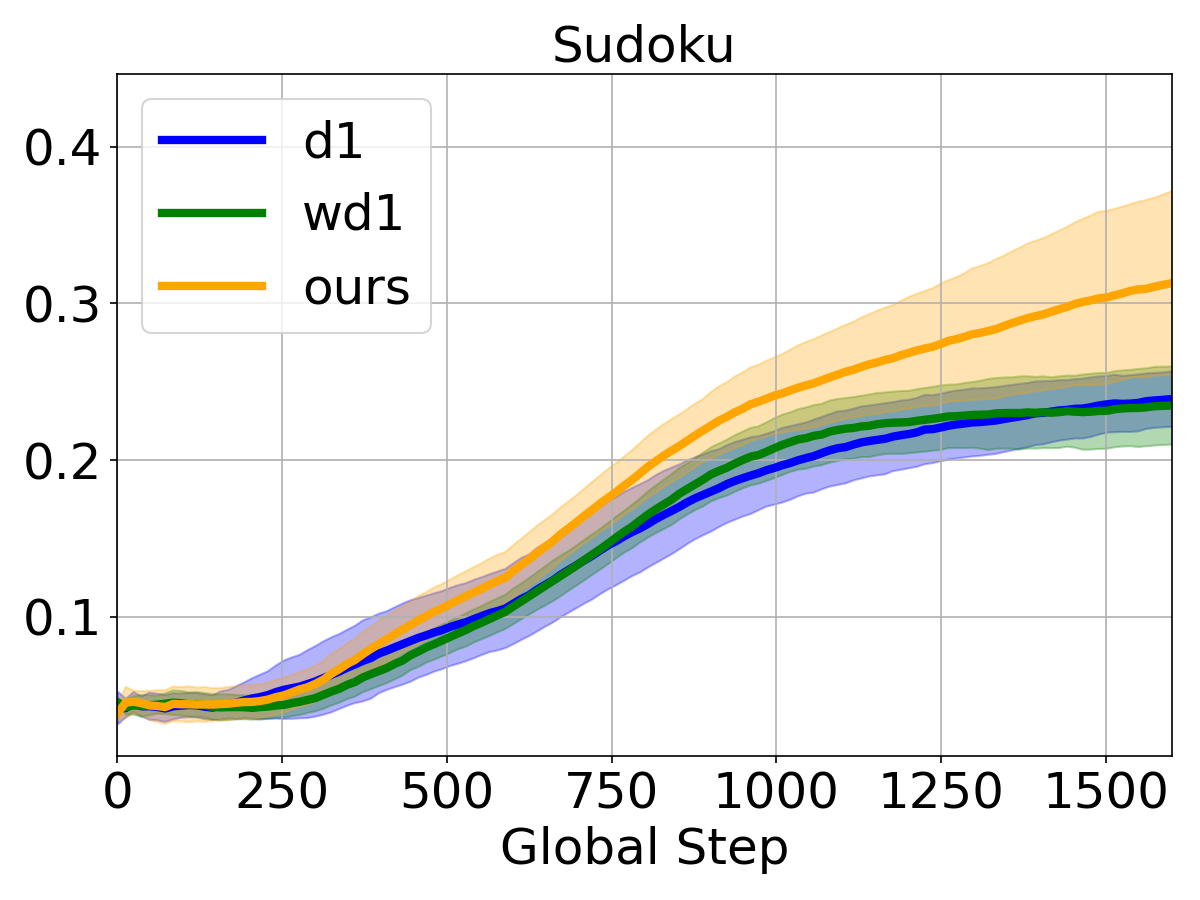}
    \caption{Sudoku}
  \end{subfigure}
  \begin{subfigure}[t]{0.24\textwidth}
    \includegraphics[width=\linewidth]{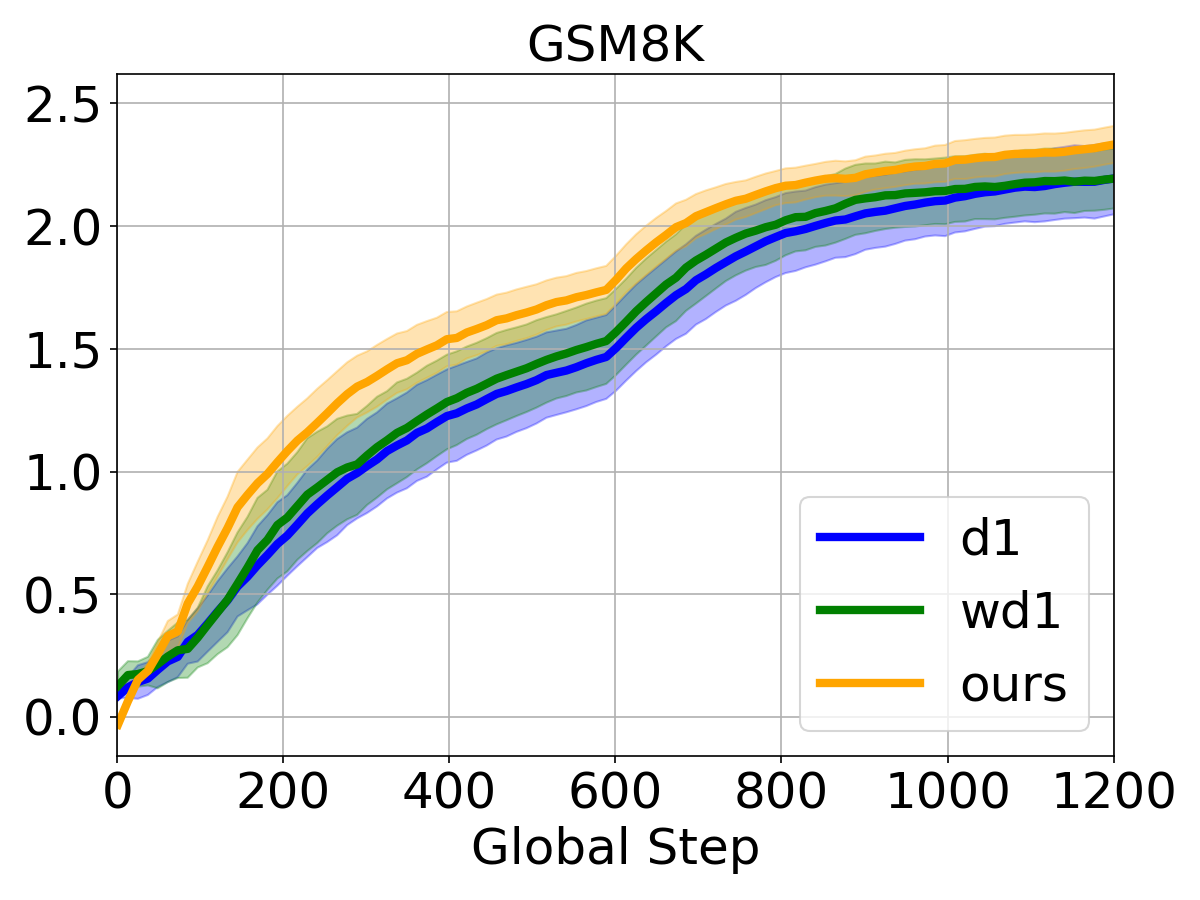}
    \caption{GSM8K}
  \end{subfigure}\hfill
  \begin{subfigure}[t]{0.24\textwidth}
    \includegraphics[width=\linewidth]{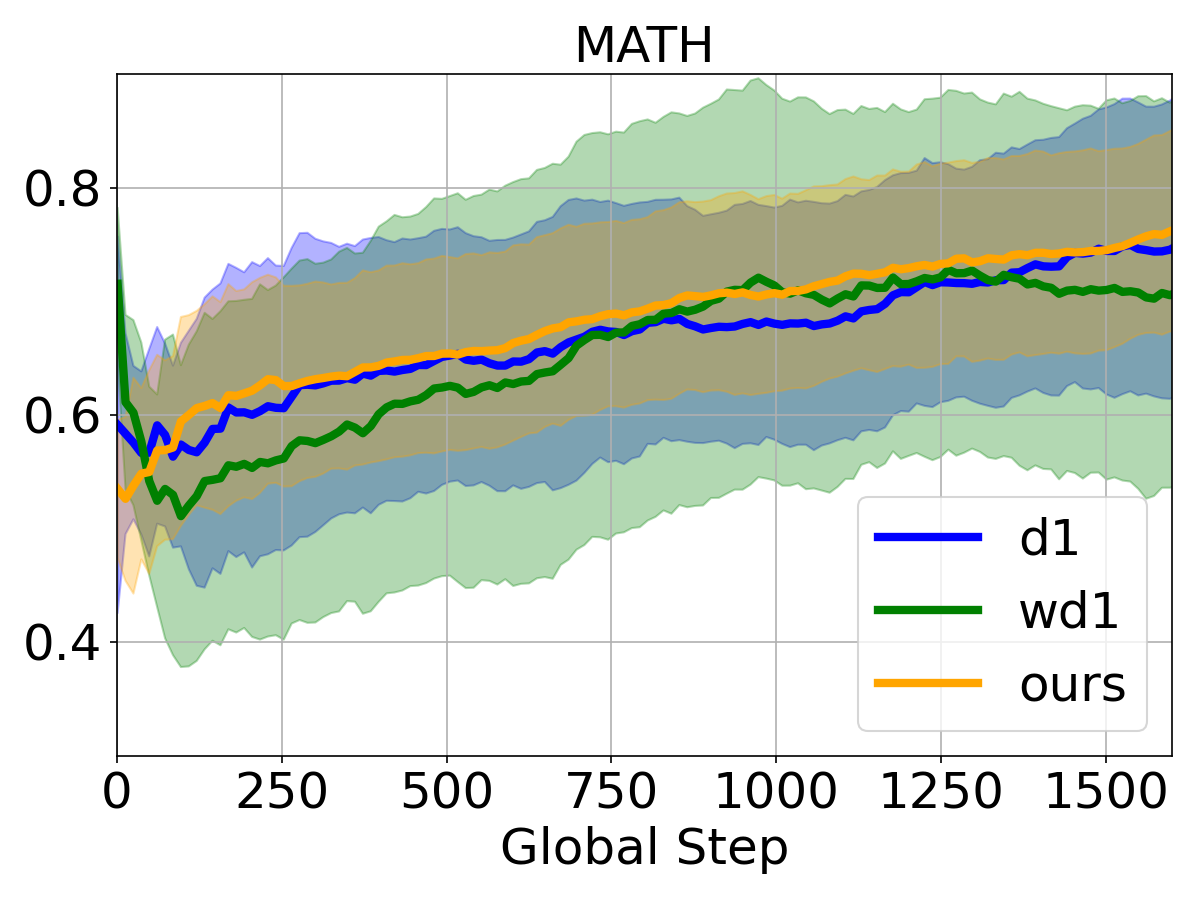}
    \caption{Math}
  \end{subfigure}\hfill
  \caption{Benchmark results (\bminus: \texttt{d1}; \gminus: \texttt{wd1}; \ominuscol: \texttt{ours}).}
  \label{fig:Benchmark Results}
\end{figure}

We plot the reward as the training progresses in Figure \ref{fig:Benchmark Results}.
Our algorithm shows a clear margin over \texttt{d1} in all tasks, especially in planning tasks like Countdown and Sudoku. Our algorithm also outperforms concurrent work like \texttt{wd1} \citep{tang2025wd1} that also targets at improving \text{d1} on the same reasoning datasets. \footnote{Note that since our paper's main focus is on the probability ratio and wd1 focuses on advantage function, it is possible to combine these two methods. We leave the investigation as future work.} 

\begin{table}[!htbp]
\centering
\footnotesize
\setlength{\tabcolsep}{5pt}
\renewcommand{\arraystretch}{1.15}
\resizebox{0.95\linewidth}{!}{%
\begin{tabular}{l ccc ccc ccc ccc}
\toprule
\multirow{2}{*}{\textbf{Model}} &
\multicolumn{3}{c}{\textbf{Countdown}} &
\multicolumn{3}{c}{\textbf{Sudoku}} &
\multicolumn{3}{c}{\textbf{GSM8K}} &
\multicolumn{3}{c}{\textbf{MATH500}} \\
& \textbf{Acc. $\uparrow$} & \textbf{$\Delta$} & \textbf{ETs}
& \textbf{Acc. $\uparrow$} & \textbf{$\Delta$} & \textbf{ETs}
& \textbf{Acc. $\uparrow$} & \textbf{$\Delta$} & \textbf{ETs}
& \textbf{Acc. $\uparrow$} & \textbf{$\Delta$} & \textbf{ETs} \\
\midrule
\textbf{Llama3-8B-Instruct}*
& 0.0 & -- & --
& 3.7  & -- & --
& 55.3 & -- & --
& 18.0 & -- & -- \\
\textbf{Qwen2.5-7B-Instruct}*
& 21.0 & -- & --
& 6.2  & -- & --
& 78.9 & -- & --
& 41.1 & -- & -- \\
\midrule
\textbf{LLaDA-8B-Instruct}
& 18.36 & -- & 214.18
& 7.37  & -- & 220.45
& 75.89 & -- & 211.63
& 34.20 & -- & 234.66 \\
\addlinespace[2pt]
\quad + \texttt{d1}
& 34.38 & {\color{ForestGreen}+16.02} & 176.04
& 24.51 & {\color{ForestGreen}+17.14} & 233.42
& 79.23 & {\color{ForestGreen}+3.34}  & 143.12
& 35.60 & {\color{ForestGreen}+1.40}  & 228.31 \\
\addlinespace[2pt]
\quad + Two Times Mean-Field
& 44.53 & {\color{ForestGreen}+26.17} & 78.94
& 29.74 & {\color{ForestGreen}+22.37} & 235.39
& 81.05 & {\color{ForestGreen}+5.16}  & 127.33
& 35.80 & {\color{ForestGreen}+1.60}  & 232.36 \\
\addlinespace[2pt]
\quad + Importance Sampling
& \textbf{48.83} & {\color{ForestGreen}\textbf{+30.47}} & 74.98
& \textbf{43.21} & {\color{ForestGreen}\textbf{+35.84}} & 244.22
& \textbf{83.47} & {\color{ForestGreen}\textbf{+7.58}}  & 128.95
& \textbf{36.40} & {\color{ForestGreen}\textbf{+2.20}}  & 232.98 \\
\bottomrule
\end{tabular}%
}
\caption{Benchmark results with statistics \textbf{(Accuracy, $\Delta$, Effective Tokens (ETs))} under Top $k$ sampler. $\Delta$ indicates the accuracy improvement over \textbf{LLaDA-8B-Instruct}; positives are highlighted in green. * Results follow from evaluation in Dream 7B \citep{Dream}.}
\label{tab:accuracy_nfe_per_dataset}
\end{table}

\begin{table}[!htbp]
\centering
\footnotesize
\setlength{\tabcolsep}{5pt}
\renewcommand{\arraystretch}{1.15}
\resizebox{0.9\linewidth}{!}{%
\begin{tabular}{l *{12}{c}}
\toprule
\multirow{2}{*}{\textbf{Model \textbackslash Acc. $\uparrow$ \textbackslash Seq Len}} &
\multicolumn{3}{c}{\textbf{Sudoku}} &
\multicolumn{3}{c}{\textbf{Countdown}} &
\multicolumn{3}{c}{\textbf{GSM8K}} &
\multicolumn{3}{c}{\textbf{MATH500}} \\
& \textbf{128} & \textbf{256} & \textbf{512}
& \textbf{128} & \textbf{256} & \textbf{512}
& \textbf{128} & \textbf{256} & \textbf{512}
& \textbf{128} & \textbf{256} & \textbf{512} \\
\midrule
\textbf{LLaDA-8B-Instruct}
& 10.01     & 7.37   &  6.40   
& 17.97     & 18.36  &   21.09  
& 68.16     & 75.89  &   79.83  
& 26.20     & 34.20  &   35.20   \\
\addlinespace[2pt]
\quad  \texttt{d1}
& 23.88     & 24.51 &  25.98   
& 27.34     & 34.38 &  37.89   
& 72.86     & 79.23 &  78.92  
& 33.20     & 35.60 &  37.20    \\
\addlinespace[2pt]
\quad  2MF+IS
& \textbf{36.43}     & \textbf{43.21} & \textbf{45.70}    
&\textbf{58.98}      & \textbf{48.83} & \textbf{56.64}    
&\textbf{75.74} & \textbf{83.47} & \textbf{82.26}    
&\textbf{34.80}      & \textbf{36.40} & \textbf{38.60}     \\
\bottomrule
\end{tabular}%
}
\caption{Ablations on the maximum sequence length of the model.}
\label{tab:256_only_blank}
\end{table}

We report concrete benchmark statistics in Table \ref{tab:accuracy_nfe_per_dataset}. The row of '+ Importance Sampling' corresponds to adopting two times mean-field and importance sampling at the same time, while the row of '+ Two Times Mean-Field' corresponds to adopting two times mean-field only.  Evidently, our proposed DiFFPO, combining both two times mean-field approximation and importance sampling, 
yields the most significant performance gain, which both components contribute to. 

We also compare the performance difference for the same model checkpoint utilizing different maximum length as in Table \ref{tab:256_only_blank} (in which we denote '2MF' as an abbreviation for Two Times Mean-Field Approximation, and 'IS' for importance sampling). We found that our DiFFPO trained models stably perform the best.

\textbf{Training DiFFPO with efficient samplers}. We also investigate the performance of RL-training on existing efficient samplers to gain the best performance. We tested upon EB samplers \citep{ben2025accelerated}, which yields a greater trade-off between accuracy and NFEs compared to the top-$k$ sampler that we used above. We found that the inference threshold $\gamma=0.1$ yields the best results in achieving the highest reward, on par with smaller $\gamma$'s and top-$k$ (Table \ref{tab:accuracy_nfe_per_dataset}, $k=2$), and better than larger $\gamma$'s which will typically yield a smaller average reward even after RL training.

\begin{table}[!htbp]
\centering
\footnotesize
\setlength{\tabcolsep}{5pt}
\renewcommand{\arraystretch}{1.15}
\resizebox{0.95\linewidth}{!}{%
\begin{tabular}{l cc cc cc}
\toprule
\multirow{2}{*}{\textbf{Model}} &
\multicolumn{2}{c}{\textbf{Sudoku}} &
\multicolumn{2}{c}{\textbf{Countdown}} &
\multicolumn{2}{c}{\textbf{GSM8K}}  \\
& \textbf{Accuracy $\uparrow$} & \textbf{NFEs $\downarrow$}
& \textbf{Accuracy $\uparrow$} & \textbf{NFEs $\downarrow$}
& \textbf{Accuracy $\uparrow$} & \textbf{NFEs $\downarrow$}
 \\
\midrule
\textbf{LLaDA-8B-Instruct}
& 10.45 & 201.23 & 22.66 & 212.33& 80.74 & 98.14\\
\addlinespace[2pt]
\quad + \texttt{d1}
& 25.88 & 105.39 & 35.16 & 197.05 & 82.34 & 65.71\\
\addlinespace[2pt]
\quad + \textbf{DiFFPO} w/o sampler joint training
& 32.23  & 141.14 & 47.27 & 38.80 & 83.47 & 57.68 \\
\addlinespace[2pt]
\quad + \textbf{DiFFPO} w/ sampler joint training 
& \textbf{37.16} & \textbf{56.80} & \textbf{51.17} & \textbf{30.45} & \textbf{84.00} & \textbf{44.00} \\
\bottomrule
\end{tabular}%
}
\caption{Benchmarking planning and math tasks results on both accuracies and NFEs under EB.}
\label{tab:EB_accuracy_nfe_per_dataset}
\end{table}

\textbf{Joint training of the model and the sampler}. 
Finally we consider joint training of the model and sampler 
by adding up the model and the sampler losses (\Eqref{Eq:DiFFPO loss}),
to improve the inference-time frontier.
In Table \ref{tab:EB_accuracy_nfe_per_dataset}, we report the optimal checkpoint performance alongside the baseline algorithms. 
We observe that joint training not only improves the (optimal) correctness, but also uses even less NFEs compared to training the model only. 
\begin{figure}[!htbp]
    \centering
    \includegraphics[width=0.5\linewidth]{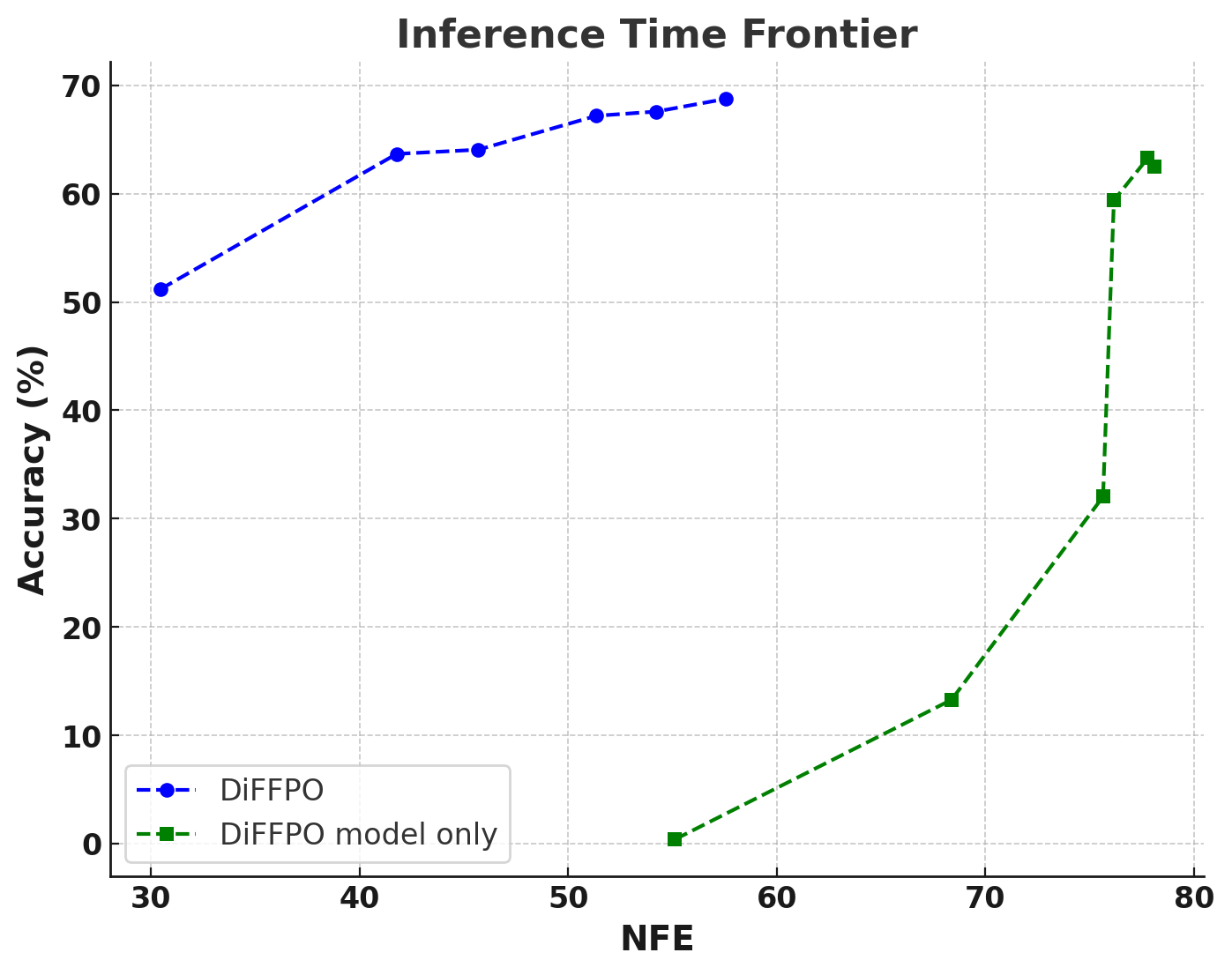}
    \caption{Inference-time frontier obtained by models by DiFFPO with or without sampler training and EB sampler with different inference threshold.}
    \label{fig:frontier}
\end{figure}

For a more detailed comparison, we draw the inference-time compute frontier for Countdown for two settings: 1) post-trained DiFFPO model with a fixed top-$k$ sampler (\Eqref{eq:diffpo_model_only}); and 2) the model and sampler jointly trained by DiFFPO (\Eqref{Eq:DiFFPO loss}), under the same 256 maximum generation length in Figure \ref{fig:frontier}. The dots are obtained by utilizing different base inference threshold (here we choose [0.05, 0.1,0.2, 0.5,1,5]) in EB sampler on two models. The clear advantage of DiFFPO trained models (i.e. with sampler joint training) showcases the importance of choosing proper performant samplers when training the models via RL which is usually overloooked, and the effectiveness of our joint training mechanism in enhancing the inference-time compute frontier.

\section{Related Works}
\label{sc:related works}
We discuss other related works on dLLMs, inference, and RL.

\textbf{Discrete Diffusion Models}. In our paper, we mainly focus on masked diffusion models with a discrete-time Markov chain setup \citep{austin2021structured,sahoo2024simple}. Other works have pursued different formulations, e.g., through continuous-time Markov chains \citep{campbell2022continuous} or flows \citep{gat2024discrete,shaul2024flow}. Despite our focus on masked diffusion models, our methodology developed in this paper is general and could benefit RL training for these other formulations. Several works have developed unified viewpoints for diffusion models in both discrete and continuous spaces, including \cite{ren2024discrete,holderrieth2024generator,sahoo2025diffusion}. The dLLMs family has also been expanded to the multimodal domain, e.g., MMaDA \citep{yang2025mmada} and Fudoki \citep{wang2025fudoki}. 

\textbf{dLLMs inference}. Most existing works on dLLMs inference have focused on developing more performant samplers, using ideas around remasking/predictor-corrector sampling \citep{gat2024discrete,wang2025remasking} for inference-time scaling and/or planning \citep{ye2024diffusion,huang2025reinforcing,zheng2023reparameterized,peng2025path}. EB-sampler \citep{ben2025accelerated} focuses on efficiency, which is less studied for discrete diffusion models. Prior work like \citet{park2024jump} also studied accelerating diffusion LLMs via a joint dependence perspective. There are also several recent works \citep{wu2025fast,sahoo2025esoteric} on enabling KV caches for discrete diffusion models, which will speed up both the inference and RL training processes. Most of these ideas are orthogonal to our contributions and can be easily integrated. We leave them as future work.

\textbf{RL for dLLMs}. Comparing to the extensive study on RLHF \citep{ouyang2022training,winata2025preference} and RLVR \citep{lambert2024tulu,zhang2025survey} for LLMs, there is noticably less study on dLLMs. \cite{zekri2025fine} proposed to fine tune smaller scale discrete diffusion models with policy gradient methods, yet it requires full likelihood computation, which is hard and expensive for modern dLLMs. \texttt{d1} \citep{zhao2025d1} proposes diffu-GRPO, which is the first work on training base dLLMs for reasoning with RL. \citet{zhu2025llada} proposed a DPO-style \citep{rafailov2023direct}  preference optimization method underpinning the development of LLaDA 1.5. \cite{yang2025mmada} proposed UniGRPO which utilizes random partial masking on completions instead of full masking. LLaDOU \citep{huang2025reinforcing} trained a token-position aware sampler for enhanced dLLMs reasoning performance. Concurrent to our work, wd1 \citep{tang2025wd1} used weighted advantage function to improve upon $d1$ on the same math and planning tasks as ours, DiffuCoder \citep{gong2025diffucoder} used coupled-GRPO to improve the base coding dLLMs, TraceRL \citep{wang2025revolutionizing} also proposes to utilize online trajectory token latents, which is similar to our two times mean-field approximation, yet our parameterization based on time steps is more generalizable and flexible for different dLLM samplers.

\section{Discussion and Future Works}
\label{sc:discussions}
In this work, we propose DiFFPO -- an RL pipeline for post-training dLLMs towards efficient reasoning. We showcase the effectiveness of our proposed algorithms in terms of better likelihood approximation, a unified off-policy RL perspective, and sampler joint training, which achieves
better sample efficiency, superior performance, and enhancing the accuracy/efficiency tradeoff. We believe our contributions in the paper help push the boundary of RL for dLLMs post training research which is less explored.

There are several directions that can be pursued to improve or extend our work. It is interesting to understand the generalization behavior of the trained inference threshold; it is also possible to adapt our RL framework to a state (or group)-dependent inference threshold instead of being fixed for each prompt. In addition, we only test our model on LLaDA \citep{LLaDA}, it will be complimentary to see the performance on other open source dLLMs like Dream \citep{Dream}. We leave these investigations for future work.

\section*{Usage of LLMs}
We mainly utilized LLMs to help write code when conducting experiments to aid fast development of the training scripts of our training algorithms. We also use LLMs to generate some of the table and figure templates. To the best of our knowledge, all of these aforementioned do not violate the ICLR 2026 code of conduct.

\bibliography{iclr2026_conference}
\bibliographystyle{iclr2026_conference}

\newpage
\appendix
\section{More experimental results}
\subsection{Benefits of Two Times Mean Field Approximation}
\label{App: Two Times Mean Field provides better approximation}
We empirically measure the evolution of average KL divergence during the iterative decoding process of dLLMs across multiple reasoning benchmarks and the evidence that our proposed two times mean field approximation provides better approximation.

\textbf{Experimental Setup.} We use the LLaDA instruct version for inference and use confidence based remasking. We set the number of allowed unmasked token per time step to be 1. Temperature is set as 0 for deterministic generation. For each dataset, we choose 100 prompts to compute the average and set block size to 32. For simplicity, we only compute the results for the first block.

\textbf{Results on different datasets.} As showcased in Figure, for all datasets (GSM8K, Countdown, Sudoku and MATH), we have reduced avg KL divergence error by using two times mean field approximation.
\begin{figure}[!htbp]
    \centering
    \includegraphics[width=0.5\linewidth]{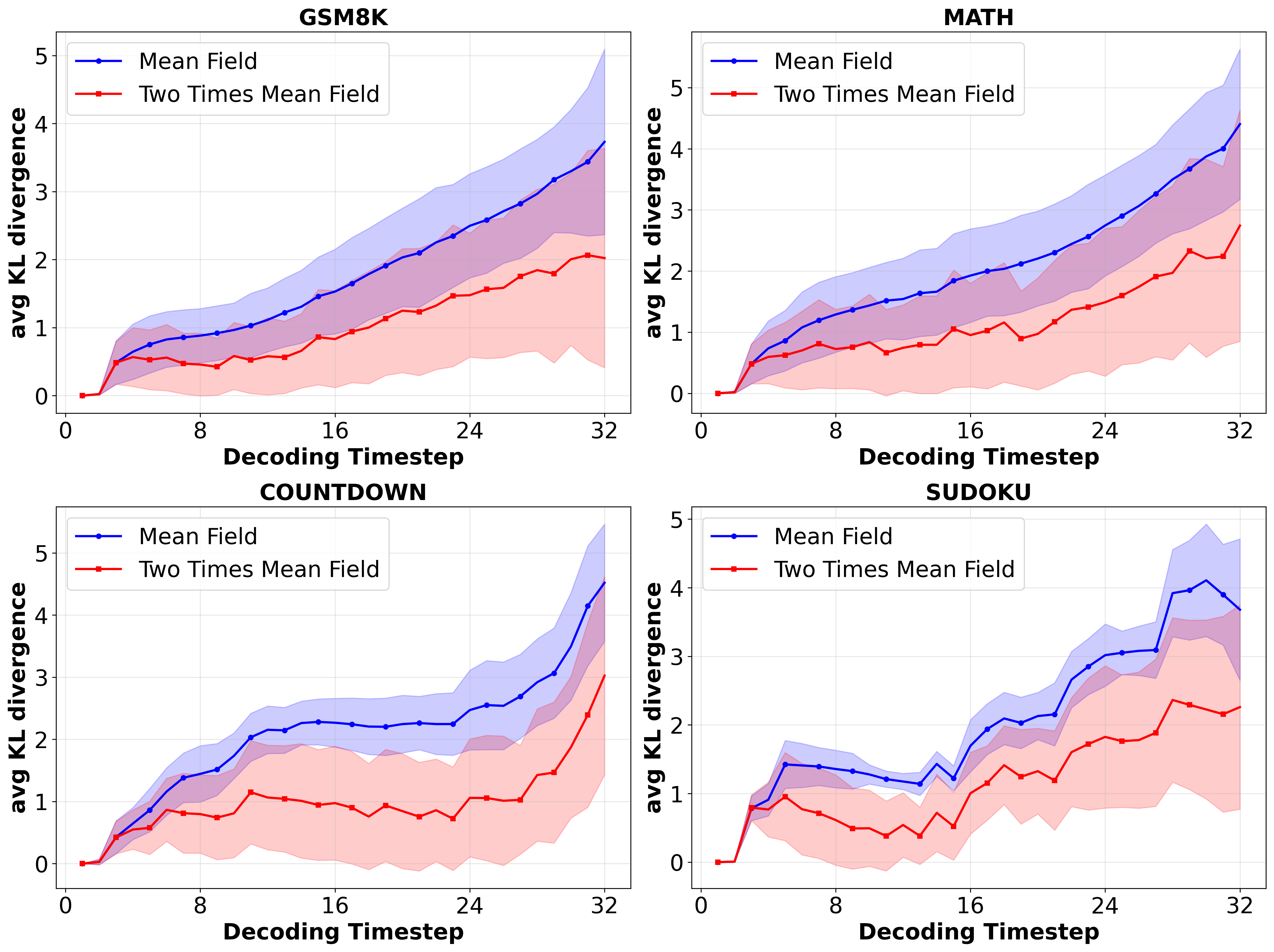}
    \caption{KL divergence between approximated likelihood and true likelihood.}
    \label{fig:error MF vs 2MF all datasets}
\end{figure}

\subsection{Importance ratio clamping hyperparameter sweeping}
\label{App: Importance ratio clipping and clamping hyperparameter sweeping}



\textbf{Importance ratio clamping.} We also study the effect of the role of importance ratio clamp hyperparameter $C$ and compute the average value after clamping for different clamping values. The results are showcased in Figure \ref{fig:clamp ratio plot}. Overall the smaller the clamp constant, the smaller the average of the clamped importance ratio. Despite that the constants in our clamp value search space is all able to control extreme ratio values, we still witness that the constants will have an non neglectable effect on the final obtained algorithm/model performance.

\begin{figure}[!htbp]
    \centering
    \includegraphics[width=0.5\linewidth]{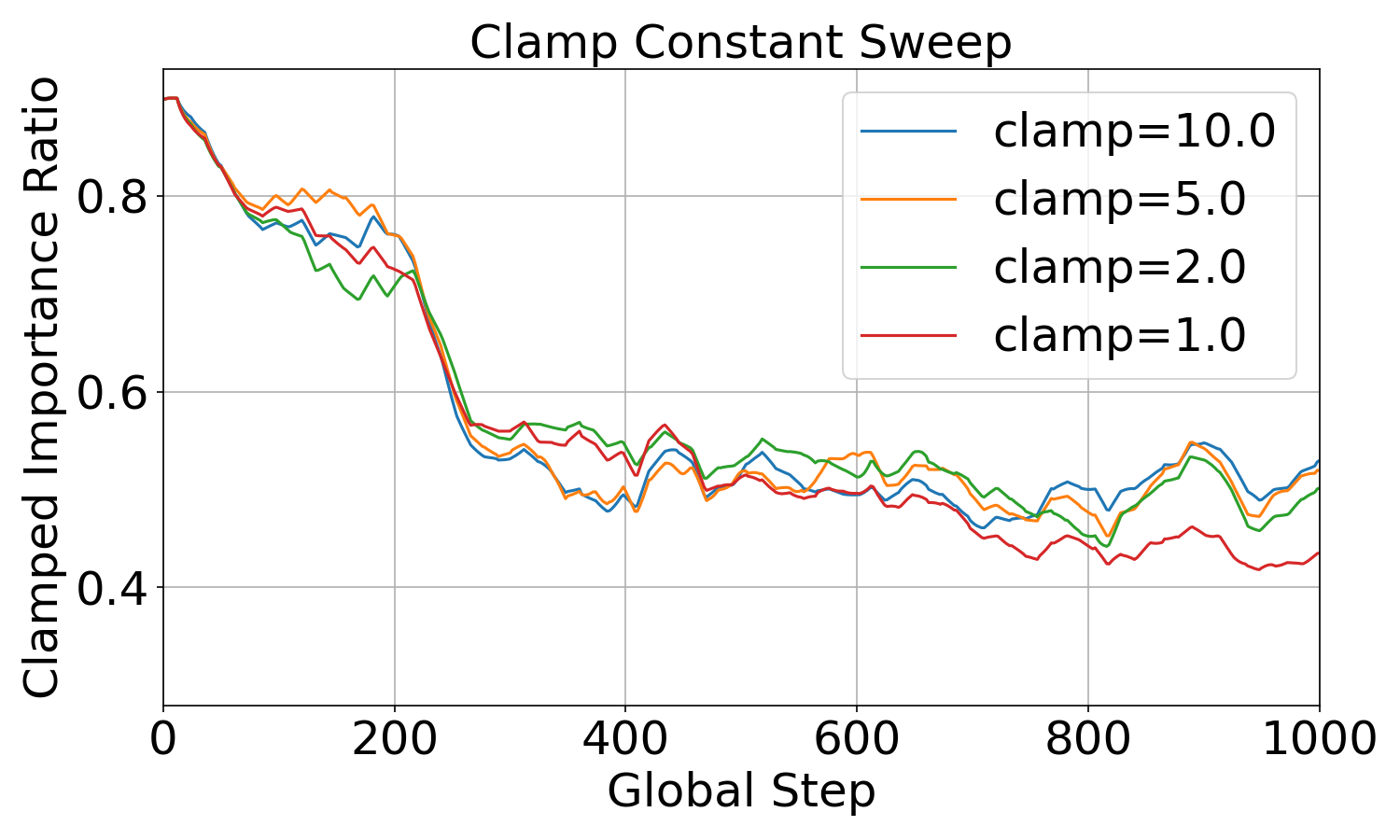}
    \caption{KL divergence between approximated likelihood and true likelihood.}
    \label{fig:clamp ratio plot}
\end{figure}

\end{document}